\newcommand{\norm}[1]{\left|\left| #1 \right|\right|}
\newcommand{\set}[1]{\left\{ #1 \right\}}
\theoremstyle{plain}
\newtheorem{theorem}{Theorem}[section]
\newtheorem{corollary}[theorem]{Corollary}
\theoremstyle{definition}
\theoremstyle{remark}
\icmltitlerunning{Graph Neural Tangent Kernel: Convergence on Large Graphs}
\begin{document}

\twocolumn[
\icmltitle{Graph Neural Tangent Kernel: Convergence on Large Graphs}



\icmlsetsymbol{equal}{*}

\begin{icmlauthorlist}
\icmlauthor{Sanjukta Krishnagopal}{equal,yyy,xxx,sch}
\icmlauthor{Luana Ruiz}{equal,sch,comp}
\end{icmlauthorlist}

\icmlaffiliation{yyy}{Dept. of Electrical Engineering and Computer Science}
\icmlaffiliation{xxx}{Dept. of Mathematics, UCLA}
\icmlaffiliation{comp}{MIT CSAIL}
\icmlaffiliation{sch}{Work done in part while visiting the Simons Institute for the Theory of Computing.}

\icmlcorrespondingauthor{Sanjukta Krishnagopal}{sanjukta@berkeley.edu}
\icmlcorrespondingauthor{Luana Ruiz}{ruizl@mit.edu}

\icmlkeywords{keywords}

\vskip 0.3in
]


\newcommand{\sanjukta}[1]{\textcolor{blue}{#1} }


\printAffiliationsAndNotice{\icmlEqualContribution} 

\begin{abstract}
Graph neural networks (GNNs) achieve remarkable performance in graph machine learning tasks but can be hard to train on large-graph data, where their learning dynamics are not well understood. We investigate the training dynamics of large-graph GNNs using graph neural tangent kernels (GNTKs) and graphons. In the limit of large width, optimization of an overparametrized NN is equivalent to kernel regression on the NTK. Here, we investigate how the GNTK evolves as another independent dimension is varied: the graph size. We use graphons to define limit objects---graphon NNs for GNNs, and graphon NTKs for GNTKs---, and prove that, on a sequence of graphs, the GNTKs converge to the graphon NTK. We further prove that the spectrum of the GNTK, which is related to the directions of fastest learning which becomes relevant during early stopping, converges to the spectrum of the graphon NTK. This implies that in the large-graph limit, the GNTK fitted on a graph of moderate size can be used to solve the same task on the large graph, and to infer the learning dynamics of the large-graph GNN. These results are verified empirically on node regression and classification tasks.
\end{abstract}

\section{Introduction}
\label{sec:intro}

 Several real-world systems such as social-interactions, brain-connectome, epidemic spread, recommender systems, and traffic patterns are best represented by structured data in the form of large graphs. Graph neural networks (GNNs) are deep neural network architectures that leverage these graph structures to learn meaningful representations of node and edge data \cite{kipf17-classifgcnn,hamilton2017inductive,defferrard17-cnngraphs,gama18-gnnarchit}. GNNs have shown remarkable empirical performance in a number of graph machine learning tasks, but can be hard to train on large-graph data. Recent research efforts have attempted to understand the large-graph behavior of GNNs, and in particular why GNNs trained on small graphs scale well to large networks \cite{ruiz20-transf,levie2019transferability,keriven2020convergence}. However, the specific learning dynamics of a GNN trained directly on the large network, which are known to be challenging, are not as well understood.

Modern deep neural networks (DNNs) are typically overparametrized. The benefits of overparametrization include faster convergence \cite{allen2019convergence, arora2018optimization} and better generalization \cite{cao2019generalization}, but on the other hand
 the parameters can be more difficult to interpret and the learning dynamics harder to understand. A remarkable contribution of \citet{jacot2018neural} was the observation that, in the infinite width limit, learning the weights of a DNN via gradient descent reduces to kernel regression with a deterministic and fixed kernel called the \emph{neural tangent kernel} (NTK), which captures the first-order approximation of the neural network's evolution during gradient descent \cite{lee2019wide}. 
Since its introduction, the NTK has been an important and widely-studied tool in the machine learning toolbox, and kernel regression using the NTK has shown strong performance on small datasets \cite{arora2019harnessing}. 

In the graph case, it is straightforward to define the NTK associated with a GNN, or the graph neural tangent kernel (GNTK) \cite{NEURIPS2019_663fd3c5}. The GNTK allows studying the training dynamics of the GNN when the number of features (the analog of width in the DNN) is large.~Nonetheless, the effect of \textit{graph size} on the GNN learning dynamics, which can itself be thought of as another `width dimension', is not well understood, and there is limited research that investigates this rigorously. Herein, we propose to to understand the training dynamics of GNNs that are wide in both of these senses by combining the NTK formalism with the theory of graphons \cite{lovasz2012large,borgs2008convergent}.

A graphon is a symmetric bounded measurable function~$\bbW: [0,1]^2 \rightarrow [0,1]$ representing the limit of a sequence of dense graphs. It can also be interpreted as a random graph model, in which case we can use the graphon to sample stochastic graphs. The interpretation of $\bbW$ as both a graph limit and a random graph model makes it so that each graphon defines a family of similar graphs. Hence, one can expect properties of a graphon to generalize, in a probabilistic sense, the properties of graphs belonging to its family. Graphons have been used to study the limit behavior of GNNs, which converge to so-called graphon neural networks (WNNs) \cite{ruiz20-transf}. The fact that GNNs have a limit on the graphon implies that they are transferable across graphs in the same family, thus allowing a GNN to be trained on a graph of moderate size and transferred to a larger graph.


\subsection{Contributions}

In this paper, our first contribution is to define the graphon NTK (WNTK) associated with the WNN (Sec. \ref{sec:ntk}).
We then prove, using mathematical induction, that the GNTK converges to the WNTK (Thm. \ref{theorem1}). In practice, this implies that GNTKs, like GNNs, are transferable across graphs of different sizes associated with the same graphon. That is to say, one can subsample a small graph and the corresponding data from a large graph, then fit the subsampled data to the small-graph GNTK via kernel regression, and then transfer the fitted model to the large-graph GNTK, which is particularly important for large graphs.

A more important implication of the convergence of the GNTK is that it is possible to understand the training dynamics of GNNs on large graphs by analyzing the behavior of the corresponding GNTK in graphs of moderate size. 
For instance, the eigenvalues of the NTK are associated with the speed of convergence along the corresponding eigendirections \cite{jacot2018neural}. In Thm. \ref{theorem2}, we show that the eigenvalues of the graph GNTK, which indicate the directions of fastest convergence of the GNN, converge to the eigenvalues of the WNTK. This allows these eigenvalues to be estimated from GNTKs associated with smaller graphs. Lastly, we verify our theoretical results in three numerical applications: prediction of opinion dynamics on random graphs, movie recommendation using the MovieLens dataset, and node classification on the Cora, CiteSeer and PubMed networks. We observe the convergence of the GNTK (Sec. \ref{sbs:sims_conv}), the effect of width in kernel regression and GNN training (Sec. \ref{sbs:wide_sims}), and the convergence of the GNTK eigenvalues on sequences of graphs (Sec. \ref{sbs:eig_sims}).

\comment{
 Let $\bbPhi(\bbx,\ccalH)$ be a neural network where $\bbx \in \reals^d$ is the input data and $\ccalH$ is a set containing the neural network's learnable weights, which we stack in the vector $\bbh \in \mbR^{|\ccalH|}$. Consider the first-order Taylor approximation of $\bbPhi$ with respect to $\bbh$, explicitly
\begin{equation}
\bbPhi(\bbx,\ccalH) - \bbPhi(\bbx,\ccalH_0) \approx \nabla_\bbh \bbPhi(\bbx,\ccalH_0)^T(\bbh-\bbh_0) \text{.}
\end{equation}
Since this expression is linear in the model weights $\bbh$, if the neural network is close to its linear approximation, finding the optimal weights $\bbh$ consists of solving a linear regression where the inputs $\bbx$ are modified by the function $\nabla_\bbh \bbPhi(\bbx,\ccalH_0)^T$. Seeing this function as a feature map, we can define the corresponding kernel $\bbTheta(\bbx,\bbx')=\nabla_\bbh\bbPhi(\bbx,\ccalH_0)^T\nabla_\bbh\bbPhi(\bbx',\ccalH_0)$. This kernel---which depends only on the initial weights $\ccalH_0$---is called the neural tangent kernel or NTK \cite{jacot2018neural}.
In \cite{jacot2018neural}, it was shown that, in the infinite width limit as $d \to \infty$, the neural network $\bbPhi$ behaves very similarly to its linear approximation. To see this, let $\bby \in \reals^d$ be target output we are trying to approximate with $\bbPhi$. When the neural network has infinitely many parameters, a small change in $\ccalH$ can lead to a large change in $\bbPhi(\bbx,\ccalH)$. Hence, the change in $\ccalH$ has to be very small in order for $\bbPhi$ not to deviate from its current estimate of $\bby$. In other words, $\ccalH$ stays almost constant (close to $\ccalH_0$), making it so that the first-order Taylor approximation is accurate. In practice, the NTK is important because it helps understand the training dynamics of wide neural networks. For example, the eigenvalues of $\bbTheta$ determine the speed at which training converges along each eigendirection of the kernel. The NTK associated with standard, fully connected neural networks has been the subject of much research, and a handful of works have studied the graph NTK (GNTK) associated with GNNs where the graph $\bbG$ is seen as the input data, such as in graph classification, where the graphs (e.g., molecules) tend to be small. On the other hand, the behavior of wide GNNs on node-level tasks, where the node data varies but the graphs are fixed and  typically large (e.g., user similarity networks in recommendation systems), remains to be understood. At Simons, \textbf{Sanjukta Krishnagopal} and I worked on this problem by defining the appropriate GNTK and studying its convergence to the graphon NTK on dense graphs. The convergence of the GNTK follows from the fact that the graph spectra converges to the graphon spectra, and uses similar steps as the proof of convergence of GNNs in \cite{jacot2018neural}. In a preliminary set of experiments, we observe that, in the same way that the NTK provides a good approximation of fully connected neural networks as their width increases, the GNTK provides a good approximation of the GNN as the number of features in its layers increases. This is illustrated in Fig. \ref{fig:wntk} for GNNs with $50$, $500$, and $1000$ features per layer trained to solve a consensus task. In our future work, we plan on running experiments on graphs of growing size, to analyze the convergence of the GNTK; and on understanding the influence of the graph and graphon spectra in the spectra of the GNTK and of the WNTK. This work will be featured in a paper under preparation for submission at ICML 2023.
}


\section{Related Work}
\label{sec:related}

\noindent \textbf{Neural tangent kernels.} The connection between infinitely wide DNNs and kernel methods (Gaussian processes) has been known since the 1990s \cite{neal1996priors, williams1996computing}, but a more theoretical formulation was presented by \citet{jacot2018neural}, which introduced the NTK and proved its constancy property in the infinite width limit, with \citet{liu2020linearity} later showing that constancy only holds for architectures with linear output layer. Several works have derived NTKs for a generalized classes of neural networks, including convolutional neural networks \cite{arora2019exact,li2019enhanced}, ResNets \cite{huang2020deep} and, most closely related to our work, GNNs \cite{NEURIPS2019_663fd3c5}.

\noindent \textbf{Graphons and size generalization in deep learning.} Graphons have been used to understand GNN convergence and transferability \cite{ruiz20-transf,ruiz2020graphonsp,maskey2023transferability}, to analyze the generalization properties of GNNs in large graphs \cite{maskey2022generalization}, and to propose more computationally efficient training algorithms for large-scale GNNs \cite{cervino2022training}. More recently, \citet{xia2022implicit} proposed implicit graphon neural representations, which use neural networks to estimate graphons. 

\citet{yehudai2021local} study graphs in which the local structure depends on the graph size---which is analogous to the dense graphs associated with graphons---, and find that GNNs are not guaranteed to scale to large graph sizes. However, their result foregoes the normalization of the adjacency matrix by the graph size, while we leverage this normalization to show operator norm convergence of the kernels associated with the graph convolution.
It is also worth noting that our work is fundamentally different than GNN convergence or transferability results, because it relates not to the GNN architecture but to the GNN \textit{learning dynamics}, and uses a mathematical induction proof as opposed to the spectral convergence argument typically used in GNN convergence proofs.


\section{Graph and Graphon Neural Networks}
\label{sec:prelim}

Let $\bbG_n=(\ccalV,\ccalE,w)$ be a graph where $\ccalV$, $|\ccalV|=n$, is the set of nodes or vertices, $\ccalE \subseteq \ccalV \times \ccalV$ is the set of edges and $w:\ccalE\to\reals$ is a map assigning weights to the edges in $\ccalE$. The \textit{size-normalized} adjacency matrix of $\bbG_n$, denoted $\bbA_n \in \reals^{n\times n}$, is given by $[\bbA]_{ij}=w(i,j)/n$. In this paper we focus on undirected graphs $\bbG_n$, with symmetric $\bbA_n$.

\subsection{Graph Neural Networks}
Supported on the graph $\bbG_n$ with $n$ nodes, we define node data $\bbx_n \in \reals^n$---also called \textit{graph signals} \cite{shuman13-mag}---where $[\bbx_n]_i$ is the value of the signal at node $i$. 
GNNs iteratively update each node's data by aggregating the data from its neighbors using \textit{graph convolutions}.
An order-$K$ graph convolution is defined as
\begin{equation} \label{eqn:graph_convolution}
\bby_n = H(\bbx_n) = \sum_{k=0}^{K-1} h_k \bbA_n^k \bbx_n
\end{equation}
where $h_0,\dots,h_{K-1}$ are the convolution coefficients. When $K=2$ and $\bbA_n$ is binary, \eqref{eqn:graph_convolution} can be seen as an aggregation operation akin to the $\texttt{AGGREGATE}$ operation in, e.g., \cite{xu2018how,hamilton2017inductive}, or as the message-passing operation in message-passing neural networks (MPNNs) \cite{gilmer2017neural}.

More generally, let $\bbX_n \in \reals^{n \times F}$ and $\bbY_n \in \reals^{n \times G}$ be $F$- and $G$-dimensional signals respectively, where the $f$th ($g$th) column is a \textit{feature} $\bbx_n^f$ ($\bby_n^g$). 
In this case, the graph convolution generalizes to
\begin{equation} \label{eqn:gen_graph_convolution}
\bbY_n = H(\bbX_n) = \sum_{k=0}^{K-1} \bbA_n^k \bbX_n \bbH_k
\end{equation}
with weights $\bbH_0, \bbH_1, \ldots, \bbH_{K-1} \in \reals^{F \times G}$. Note that the number of parameters of the graph convolutions in \eqref{eqn:graph_convolution} and \eqref{eqn:gen_graph_convolution}, $K$ and $KFG$ respectively, are independent of $n$.


A GNN consists of $L$ layers, each of which composes a graph convolution and a nonlinear activation function. Explicitly, the $l${th} layer of a GNN can be written as
\begin{align} \label{eqn:gcn_layer}
\begin{split}
\bbX_{l,n} &= \sigma \left(\bbU_{l,n}\right) \\
\bbU_{l,n} &= H(\bbX_{l-1,n}) = \sum_{k=0}^{K-1} \bbA_n^k \bbX_{l-1,n} \bbH_{l,k} 
\end{split}
\end{align}
where $\bbX_{l-1,n} \in \reals^{n \times F_{l-1}}$ is the layer input, $\bbX_{l,n} \in \reals^{n \times F_l}$ is the layer output, $\bbH_{l,k} \in \reals^{F_{l-1} \times F_l}$ are the layer weights and $\sigma$ is a pointwise nonlinear activation function, i.e., $[\sigma(\bbX)]_{ij}=\sigma([\bbX]_{ij})$. Typical choices for $\sigma$ include ReLU, tanh, and sigmoid. 
At the first layer of the GNN, the input $\bbX_{0,n}$ is the input data $\bbX$. Similarly, the GNN output $\bbY_n$ is given by the last layer output $\bbX_{n,L}$. 

In the following, we represent the entire GNN consisting of the concatenation of $L$ layers like \eqref{eqn:gcn_layer} as the parametric map $\bbY_n = f(\bbX_n; \bbA_n, \ccalH)$, where $\ccalH = \{\bbH_{l,k}\}_{l,k}$ groups the learnable weights $\bbH_{l,k}$ at all layers. This more concise representation highlights the independence between the adjacency matrix $\bbA_n$ (i.e., the graph) and the parameters $\ccalH$, which the GNN inherits from the graph convolution. We note that while we describe a general class of GNNs here, the methods in our paper can be directly adapted to other types of GNNs.

\subsection{Graphon Neural Networks}

\begin{table}[t]
    \caption{Description of variables.}
    \centering
    \begin{tabular}{c|c}
    \hline
        Variable & Description  \\ \hline
        $\bbA_n$ & Adjacency matrix of graph $\bbG_n$ \\
        $\bbx_n$ & Graph signal on $\bbG_n$\\
        $\bbW_n$ & Graphon induced by $\bbG_n$ \\ 
        $X_n$ & Graphon signal induced by $\bbx_n$\\
        $\bbW$ &  Limiting graphon \\
        $X$  & Limiting graphon signal \\ \hline
    \end{tabular}
    \label{tb:symbols}
\end{table}

Graphons are bounded, symmetric, measurable functions~$\bbW:[0,1]^2\to [0,1]$ representing limits of sequences of dense graphs \cite{lovasz2012large,borgs2008convergent}. 
A graph sequence $\{\bbG_n\}$ converges to a graphon in the sense that the densities of homomorphisms of any finite, unweighted and undirected graph $\bbF=(\ccalV',\ccalE')$ into $\bbG_n$ converge to the densities of homomorphisms of $\bbF$ into $\bbW$. The graphs $\bbF$ can be thought of as motifs, such as triangles, $k$-cycles, $k$-cliques, etc. Explicitly, 
let $\mbox{hom}(\bbF,\bbG_n)$ denote the total number of homomorphisms between $\ccalV'$ and $\ccalV_n$. 
The density of such homomorphisms is given by $t(\bbF,\bbG_n) = {\mbox{hom}(\bbF,\bbG_n)}/{n^{|\ccalV'|}}$ and we can similarly define $t(\bbF,\bbW$) (see \cite{borgs2008convergent}). Then, we say that $\bbG_n \to \bbW$ if and only if
\begin{equation}\label{eqn:graph_convergence}
   \lim_{n \to \infty} t(\bbF,\bbG_n) = t(\bbF, \bbW)
\end{equation}
for all simple $\bbF$ \cite{lovasz2006limits}. 

Alternatively, the graphon can also be seen as a generative model for stochastic (also called $\bbW$-random) graphs. Nodes are picked by sampling points $u_i$, $1 \leq i \leq n$, from the unit interval and connecting edges between nodes $i$ and $j$ with probability $\bbW(u_i,u_j)$. Importantly, sequences of stochastic graphs generated in this way converge almost surely to the graphon \cite{lovasz2012large}[Cor. 10.4].

The notion of graph signals is extended to graphons by defining graphon signals, which are functions $X:[0,1] \to\reals$ \cite{ruiz2020graphonsp}. We restrict attention to graphon signals with finite energy, i.e., $X \in L^2([0,1])$. 

Analogously to \eqref{eqn:gen_graph_convolution}, given $F$- and $G$-dimensional graphon signals $X:[0,1]\to\reals^F$ and $Y:[0,1]\to\reals^G$, the graphon convolution is defined as \cite{ruiz2020graphonsp}
\begin{align} \label{eqn:gen_graphon_convolution}
\begin{split}
Y = T_H X &= \sum_{k=0}^{K-1} T_{W}^{(k)} X \bbH_k \\
T_{W}^{(k)}X &= \int_0^1 \bbW(u,v)T_W^{(k-1)} X(u)du
\end{split}
\end{align}
where $T_{W}^{(0)}=\bbI$ is the identity and the convolution weights are collected in the matrices $\bbH_0, \bbH_1, \ldots, \bbH_{K-1} \in \reals^{F \times G}$.

The extension of the GNN to graphon data is the graphon neural network (WNN). Akin to the GNN, the WNN is formed by $L$ layers each of which composes a graphon convolution and a nonlinear activation function. Explicitly, the $l$th layer of the WNN is given by
\begin{align}\label{eqn:wcn_layer}
\begin{split}
X_{l} &= \sigma\left(U_l\right) \\
U_l &= T_{H_l} X_{l-1} = \sum_{k=0}^{K-1} T_{W}^{(k)} X_{l-1}\bbH_{l,k}
\end{split}
\end{align}
where $X_{l-1}: [0,1] \to \reals^{F_{l-1}}$ is the layer input,  $X_{l}: [0,1] \to \reals^{F_{l}}$ is its output, $\bbH_{l,k} \in \reals^{F_{l-1}\times F_l}$ are its weights and $\sigma$ is a pointwise nonlinearity (e.g., the ReLU).
The input of the first layer of the WNN is $X_{0}=X$, and the output of the WNN is the last layer ouput, i.e., $Y=X_L$.

We can describe the WNN more compactly as the map $Y = f(X;\bbW,\ccalH)$, with $\ccalH = \{\bbH_{l,k}\}_{l,k}$ the set of learnable parameters at all layers. Note that, if the weights $\ccalH$ are the same, the WNN map $f(X;\bbW,\ccalH)$ is the same as the GNN map $f(X;\bbA_n,\ccalH)$ with $\bbA_n$ swapped with $\bbW$. This is important because it implies that, similarly to how graphons are generative models for graphs, \emph{WNNs are generative models for GNNs}. Indeed, we can use the WNN $f(X;\bbW,\ccalH)$ to sample the GNN $\bby_n = f(\bbx_n;\bbA_n,\ccalH)$ with
\begin{align} \label{eqn:gcn_obtained}
&[\bbA_n]_{ij} \sim \mbox{Ber}(\bbW(u_i,u_j))\nonumber \\
&[\bbx_n]_i = X(u_i) 
\end{align}
where the $u_i$ are sampled uniformly and independently at random from $[0,1]$ and $\mbox{Ber}()$ is the Bernoulli distribution.

As $n \to \infty$, sequences of GNNs sampled from the WNN as described above converge to the WNN. In fact, as long as $\ccalH$ is the same, any GNN $f(\bbx_n;\bbA_n,\ccalH)$ applied to a sequence $\{(\bbG_n,\bbx_n)\}$ converging to $(\bbW,X)$ converges to $f(X;\bbW,\ccalH)$ \cite{ruiz2020graphonsp}. A more important result in practice is that this convergence implies that GNNs are transferable across graphs associated with the same graphon, i.e., they can be trained on graphs $\bbG_n$ and  executed on graphs $\bbG_m$ with an error that decreases asymptotically with $n$ and $m$ \cite{ruiz2021transferability,ruiz20-transf}. 

In Sec. \ref{sec:convergence}, we prove a similar convergence result for the neural tangent kernels (NTKs) associated with $f(\bbx_n;\bbA_n,\ccalH)$ and $f(X;\bbW,\ccalH)$, but before doing so, we need to introduce \textit{induced WNNs}.
The WNN induced by the GNN $f(\bbx_n;\bbA_n,\ccalH)$ is defined as $Y_n = f(X_n;\bbW_n,\ccalH)$, with
\begin{align} \label{eqn:wnn_induced}
\begin{split}
&\bbW_n(u,v) = \sum_{i=1}^n\sum_{j=1}^n [\bbA_n]_{ij} \mbI(u \in I_i) \mbI(v \in I_j), \\
&X_n(u) = \sum_{i=1}^n [\bbx_n]_i \mbI(u \in I_i)
\end{split}
\end{align}
and where $\mbI$ is the indicator function, $I_i=[(i-1)/n,i/n)$ for $1 \leq i \leq n-1$, and $I_n = [(n-1)/n,1]$. The graphon $\bbW_n$ is \textit{induced by the graph} $\bbG_n$, and the graphon signals $X_n$ and $Y_n$ are \textit{ induced by the graph signals} $\bbx_n$ and $\bby_n$. A succinct description of all graph and graphon variables is provided in Table \ref{tb:symbols}.

\section{Graph and Graphon Neural Tangent Kernel} \label{sec:ntk}

Consider a general, fully-connected neural network $f(\bbx;\ccalH)$, with layers given by $\bbx_l = \sigma(\bbH_l\bbx_{l-1})$, input $\bbx_0 = \bbx \in \reals^{d_0}$ and learnable parameters $\ccalH=\{\bbH_l\}_l \in \reals^{d_{l-1} \times d_l}$, $[\bbH_l]_{pq} = h_{l,pq}$. For a training set $\{\bbx_i, \tby_i\}_{i=1}^M$, assume that the loss to be minimized is the mean squared error (MSE) or quadratic loss
\begin{equation}
    \min_\ccalH \ell(\ccalH) = \min_\ccalH \sum_{i=1}^M (f(\bbx_i;\ccalH) - \tby_i)^2.
\end{equation} 
As the training progresses, the output of the neural network for input $\bbx_i$ is updated as $\bby_i(t) = f(\bbx_i;\ccalH(t))$. As such, the weight update rule is given by
$\smash{{\frac{\partial \ccalH}{\partial t} = - \nabla \ell(\ccalH(t))}}$
and so the output evolves as
\begin{align}
\begin{split}
    &\frac{\partial \bby_i (t)}{\partial t} = \textstyle\sum_j \Theta(\bbx_i,\bbx_j;\ccalH(t)) (f(\bbx_i;\ccalH(t)) - \hby_i), \\ 
    &\Theta(\bbx_i,\bbx_j; \ccalH)\ = \sum_{l,p,q} \frac{\partial f(\bbx_i;\ccalH)}{\partial h_{l,pq}} \frac{\partial f(\bbx_j;\ccalH)}{\partial h_{l,pq}}. 
\label{eq:NTK}
\end{split}
\end{align}


In the infinite-width limit as $d_l \to \infty$, \citet{jacot2018neural,liu2020linearity} showed that, provided that the last layer has linear output (i.e., it does not have an activation $\sigma$), $\Theta(\bbx_i,\bbx_j;\ccalH)$ converges to a limiting kernel, the NTK. This kernel stays constant during training---a property called constancy---, which is equivalent to replacing the outputs of the neural network by their first-order Taylor expansion in the parameter space \cite{lee2019wide}. Hence, in the infinite-width limit the training dynamics of \eqref{eq:NTK} reduce to kernel ridge regression\footnote{In practice, it is not necessary to consider the MSE for this derivation. It suffices for the loss to be such that the norm of the training direction $f(\ccalH_0)-f(\ccalH^\star)$ is strictly decreasing during training \cite{jacot2018neural}. E.g., if we consider the cross-entropy loss, the training dynamics reduce to kernel logistic regression.} on the NTK, which has a closed-form solution. This facilitates understanding the learning dynamics of overparametrized neural networks, which are notoriously difficult to study directly, by analysis of the corresponding NTK. 

\subsection{Graph Neural Tangent Kernel}

For simplicity, we will consider a GNN with only one feature per layer; the generalization to multiple features is more involved but straightforward. 
Recall that the $L$-layer GNN supported on $\bbA_n \in \reals^{n\times n}$ is written as
\begin{align*}
  &\bbx_{n,0} = \bbx_n \quad \quad \quad \ \ \ \bbu_{l,n} = H_l (\bbx_{l - 1,n}) \\
  &\bbx_{l,n} = \sigma(\bbu_{l,n}) \quad \quad f(\bbx_n; \bbA_n, \ccalH) = \bbx_{L,n} \text{.} 
\end{align*}
{
\setlength{\jot}{11pt}
The graph NTK (GNTK) associated with this GNN is 
\begin{align*}
  \Theta (\bbx_n,\bbx'_n;\bbA_n,\ccalH) = &\nabla_\ccalH f(\bbx_n;\bbA_n,\ccalH)^T \nabla_\ccalH f(\bbx'_n;\bbA_n,\ccalH) \\
  & \hspace{-2cm} = \textstyle\sum_{k} \sigma'(\bbu_{L,n}) \bbA_n^{k} \bbx_{L-1,n} \otimes \sigma'(\bbu_{L,n}') \bbA_n^{k} \bbx_{L-1,n}' \\
  &\quad \hspace{-2cm} + \sigma'(\bbu_{L,n} )   (\sigma'(\bbu_{L-1,n}) \bbA_n^{k} \bbx_{L-2,n}) \\
  &\quad \quad  \hspace{-2cm} \otimes  \sigma'(\bbu_{L,n}') H_L(\sigma'(\bbu_{L-1,n}') \bbA_n^{k} \bbx'_{L-2,n}) + \ldots
\end{align*}
where there are $L$ terms; the first term corresponds to the last layer, the second term to the second-last layer, and so on. 
We have used $\sigma'(\bbu)$ to denote the diagonal matrix with $jj$th entry equal to $\sigma'([\bbu]_j)$, where $\sigma'$ is the derivative of $\sigma$. Note that $\nabla_\ccalH f$ is a $|\ccalH| \times n$ matrix, hence the GNTK is a matrix $\Theta_n( \bbx_n, \bbx_n';\bbA_n,\ccalH) \in \reals^{n \times n}$. 
}
\subsection{Graphon Neural Tangent Kernel}

To derive the WNTK, we will also consider a WNN with only one feature per layer for simplicity. Recall that the $L$-layer WNN associated with the graphon $\bbW$ is given by
\begin{align*}
&X_0 = X \quad  \quad \quad \ \  U_l = \textstyle T_{H_l}X_{l - 1}\\
&X_l = \sigma(U_l) \quad  \quad f(X;\bbW,\ccalH) = X_L \text{.}
\end{align*}
{
\setlength{\jot}{11pt}
Therefore, the WNTK is given by
\begin{align} \label{eqn:wntk}
\begin{split} 
  \Theta(X,X';\bbW,\ccalH) &= \\ 
  & \hspace{-2.5cm} = \textstyle\sum_{l,k} \partial_{h_{l,k}} f(X;\bbW,\ccalH) \otimes \partial_{h_{l,k}} f(X';\bbW,\ccalH) \text{.}
\end{split}
\end{align}

Calculating the derivative with respect to the $k$th weight in layer $l_j=L-j$, we get
\begin{equation*}
    \partial_{h_{l_j,k}} u_L = 
    \textstyle T_{W}^{(k)} X_{L-1}
\end{equation*}
for $j=0$, and similarly for $0 < j < L$,
\begin{equation*}
    \partial_{h_{l_j,k}} u_L = \textstyle T_{H_L} \sigma^\prime(U_{L-1}) \dots \textstyle T_{H_{l{j-1}}} \sigma'(U_{L-j}) \textstyle T_W^{(k)} X_{l_{j-1}} \text{.}
\end{equation*}
Hence, \eqref{eqn:wntk} has $L$ terms in total, explicitly
\begin{align*}
\begin{split}
  \Theta(X,X';\bbW,\ccalH) &= \\ 
  & \hspace{-1.8cm} = \textstyle\sum_{k}  \sigma'(U_L) T_{W}^{(k)} X_{L-1} \otimes \sigma'(U_L') T_{W}^{(k)} X_{L-1}' \\
  &\quad \hspace{-1.8cm} + \sigma'(U_L) T_{H_L}(\sigma'(U_{L-1}) T_W^{(k)} X_{L-2}) \\
  &\quad \hspace{-1.8cm}  \quad \otimes \sigma'(U_L') T_{H_L} (\sigma'(U'_{L-1}) T_W^{(k)} X'_{L-2}) + \ldots 
\end{split}
\end{align*}
where $\sigma'(U_L)$ is now evaluated and multiplied pointwise. 
Note that $\Theta$ is a linear operator on graphon signals, i.e., given a signal $Y \in L^2([0,1])$, the WNTK applied to this signal yields 
\begin{align*}
  \Theta(X,X';\bbW,\ccalH)Y &= \textstyle\int_0^1 \textstyle\sum_{k} \sigma'(U_L(u)) T_{W}^{(k)} X_{L-1}(u)\\
  & \quad \, \, \times \sigma'(U_L'(v))  T_{W}^{(k)} X_{L-1}'(v) Y(v) \\ 
  & \hspace{-3.1cm} \quad + \sigma'(U_L(u)) T_{H_L}(\sigma'(U_{L-1}) T_W^{(k)} X_{L-2})(u) \\ 
  & \quad \, \hspace{-3.1cm} \, \times \sigma'(U_L(v)) T_{H_L}(\sigma'(U_{L-1}) T_W^{(k)} X_{L-2})(v)Y(v) +\ldots \, dv
\end{align*}
where the first term corresponds to the last layer, the second corrresponds to the second-last layer and so on. 

For a graph $\bbA_n$ sampled from $\bbW$ with associated induced graphon $\bbW_n$, and for input features $\bbx_n$ sampled from a graphon signal $X$ and associated induced graphon signal $X_n$ [cf. \eqref{eqn:wnn_induced}],
the induced WNTK is given by the same formula as \eqref{eqn:wntk} but with $\bbW$, $X$ replaced by $\bbW_n$, $X_n$.
}
\section{Graph NTK Converges to Graphon NTK}
\label{sec:convergence}

Next, we consider a sequence of graph signals $\{(\bbG_n,\bbx_n)\}$ converging to a graphon signal $(\bbW,X)$ to evaluate the convergence of the GNTK $\Theta(\bbx_n,\bbx_n';\bbA_n,\ccalH)$ to the WNTK $\Theta(X,X';\bbW,\ccalH)$. 
First, for each pair of signals $X,X'$, the WNTK $\Theta(X,X';\bbW,\ccalH)$---which we denote $\Theta(X,X')$ to simplify notation---is a linear operator on functions in $L^2$. Hence, to prove convergence to this operator we introduce the operator norm 
\begin{align}
\norm{\Theta(X,X')} = \sup_{Y \neq 0} \frac{\norm{\Theta(X,X')Y}}{\norm{Y}} \text{.}
\label{eq:normdef}
\end{align}


\begin{figure*}[t]
\centering
\includegraphics[width=0.47\linewidth]{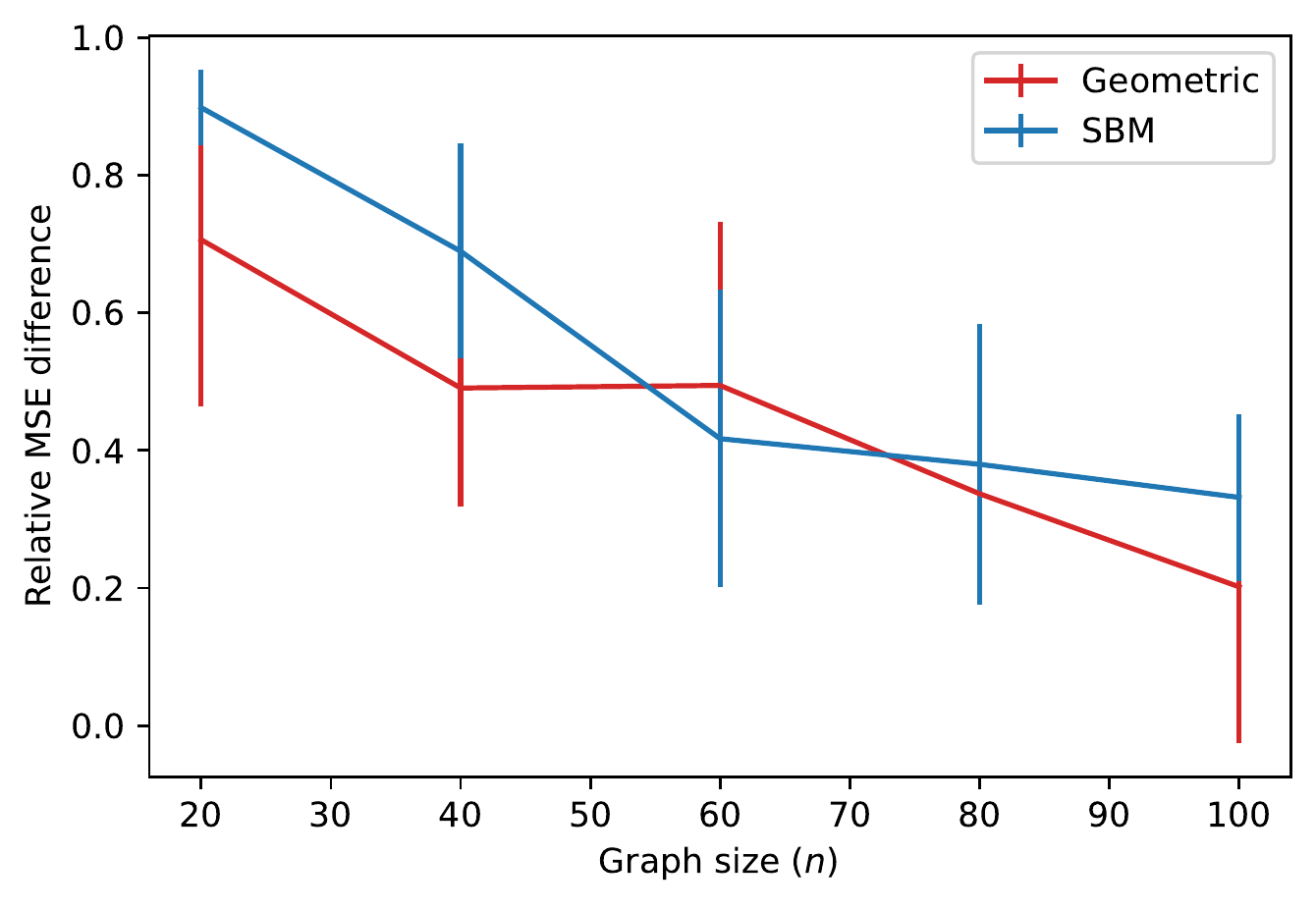}
\includegraphics[width=0.48\linewidth]{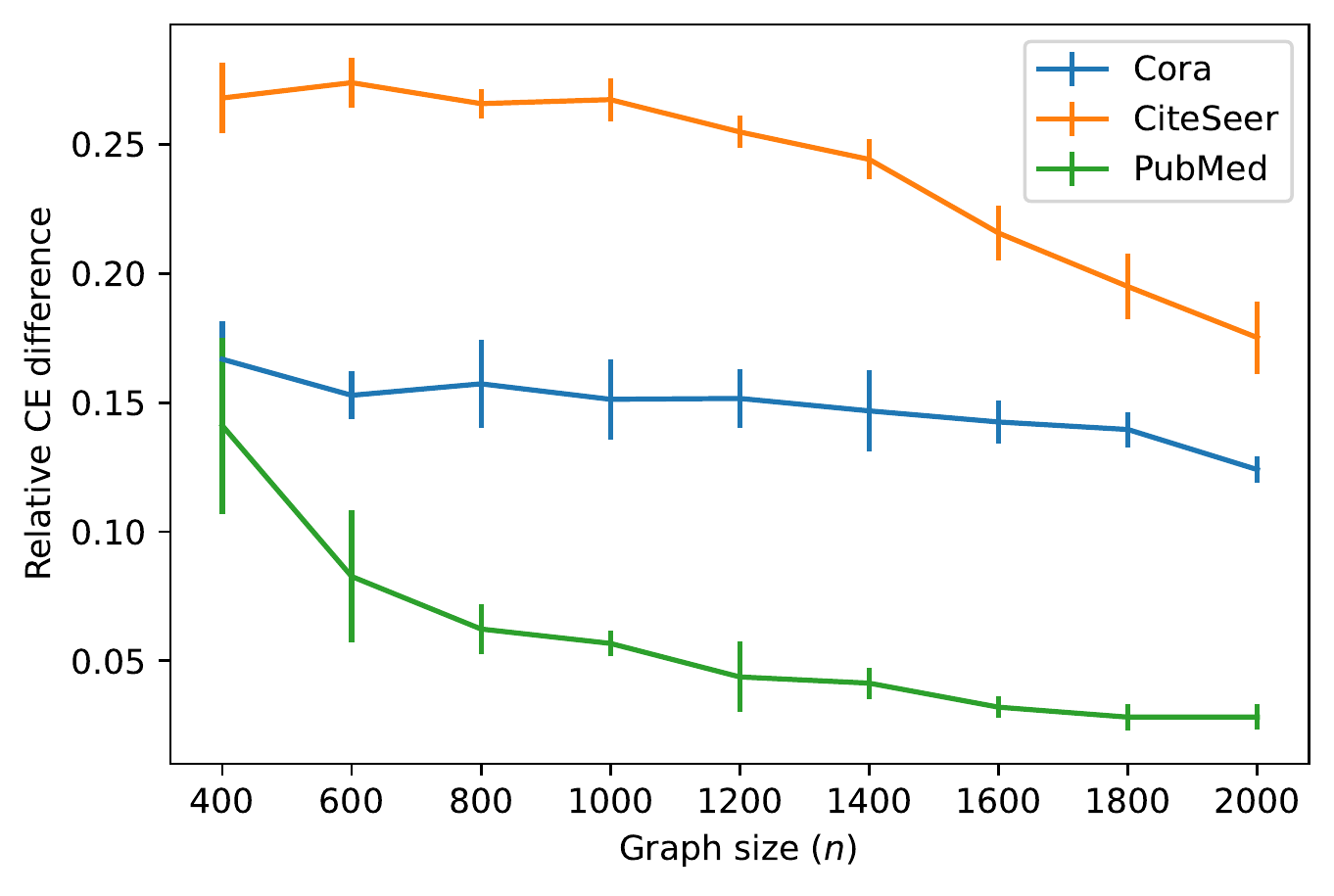}
\caption{Difference between the test error achieved by the GNTK fitted on the $n$-node graph when used for prediction the $n$-node graph, and the test error achived by the same GNTK when used for predicition on the $N$-node graph. (left) Opinion dynamics on geometric and SBM graphs, where the test error is the MSE $N=300$. (right) Node classification on Cora, CiteSeer and Pubmed, where the test error is the CE and $N=2708$, $3327$, and $10000$ respectively.}
\label{fig:convergence}
\end{figure*}

Let $\bbW$ be a graphon and $X,X'$ be fixed graphon signals.
Consider a sequence of graph signals $\{(\bbG_n,\bbx_n)\}$ (respectively $\{(\bbG_n,\bbx_n')\}$) converging to $(\bbW,X)$ (respectively $(\bbW,X')$) in the sense of \cite{ruiz2020graphonsp}[Def. 2]. The notion of convergence of graph signals on a sequence of dense graphs is simple; $\bbG_n$ converges to $\bbW$ in the homomorphism density sense [cf. \eqref{eqn:graph_convergence}], and $X_n$, the graphon signal induced by $\bbx_n$, converges to $X$ in the $L^2$ norm, i.e.
\begin{equation} \label{eqn:signal_convergence}
\|X_n-X\| \to 0
\end{equation} 
up to node relabelings (see \cite{ruiz2020graphonsp} for further details, and in particular Lemma 2 for the relationship between $(\bbG_n,\bbx_n)$ and the induced signal $(\bbW_n,X_n)$).

Let $(\bbW_n,X_n)$ be the graphon signal induced by the graph signal $(\bbG_n,\bbx_n)$ [cf. \eqref{eqn:wnn_induced}].
The main result of this paper is the following theorem that shows that the WNTK induced by the GNTK converges to the limiting WNTK. The proof is deferred to Appendix \ref{appendix:convergence_proof}.

\begin{theorem}
Let $\textbf{W}$ be a graphon and $X,X' \in L^2([0,1])$ be arbitrary graphon signals.  
Suppose that $\set{\bbG_n}$ is a sequence of graphs converging to $\bbW$ [cf. \eqref{eqn:graph_convergence}] and $\set{\bbx_n}, \set{\bbx_n'}$ are sequences of graph signals converging to $X$ [cf. \eqref{eqn:signal_convergence}].
\comment{
\begin{align*}
& \lim_{n \to \infty} ||X_n - X|| = 0 \\     
& \lim_{n \to \infty} ||X_n' - X'|| = 0 \\
& \lim_{n \to \infty} ||W_n - W|| = 0. 
\end{align*}
}
Then, for any $L$-layer GNN with finite $K$, fixed weights $\ccalH$ and $1$-Lipschitz nonlinearity $\sigma$, the associated GNTKs $\Theta(\bbx_n,\bbx'_n;\bbA_n,\ccalH)$ converge in the operator norm: 
\begin{align*}
\lim_{n \to \infty}\norm{\Theta(X_n,X'_n;\bbW_n,\ccalH)-\Theta(X,X';\bbW,\ccalH)} = 0
\end{align*}
where $\Theta(X_n,X'_n;\bbW_n,\ccalH)$ is the WNTK induced by the GNTK $\Theta(\bbx_n,\bbx_n';\bbA_n,\ccalH)$.
\label{theorem1}
\end{theorem}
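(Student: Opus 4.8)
The plan is to exploit the outer-product (rank-one) structure of the WNTK and thereby reduce operator-norm convergence to $L^2$-convergence of finitely many ``factor signals,'' which I then obtain by induction on the layers.

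First I would note that, for fixed inputs, $\Theta(X,X';\bbW,\ccalH)$ is a finite sum (over the $L$ layers and the $K$ powers) of rank-one operators of the form $a\otimes a'$, acting on $Y\in L^2([0,1])$ by $(a\otimes a')Y = a\,\langle a',Y\rangle$, where $a$ (resp.\ $a'$) is the factor signal in $u$ (resp.\ $v$) appearing in each term of the displayed WNTK expansion. Since $\norm{a\otimes a'} = \norm{a}\,\norm{a'}$ and the difference of two rank-one operators telescopes as $a_n\otimes a_n' - a\otimes a' = (a_n-a)\otimes a_n' + a\otimes(a_n'-a')$, we get $\norm{a_n\otimes a_n' - a\otimes a'}\le \norm{a_n-a}\,\norm{a_n'} + \norm{a}\,\norm{a_n'-a'}$. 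Summing over the finitely many terms, it suffices to show that each factor signal of the induced WNTK converges in $L^2$ to the corresponding factor of the limiting WNTK and that all factors remain uniformly bounded.

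The single workhorse estimate is convergence of the graphon convolution: if $Z_n\to Z$ in $L^2$, then $\norm{T_{W_n}^{(k)}Z_n - T_W^{(k)}Z}\to 0$. I would split this as $T_{W_n}^{(k)}(Z_n-Z) + (T_{W_n}^{(k)}-T_W^{(k)})Z$; the first term is bounded by $\norm{Z_n-Z}$ because the integral operator of a $[0,1]$-valued kernel has operator norm at most one, and the second term vanishes by the graphon-convolution convergence underpinning the WNN convergence result of \cite{ruiz2020graphonsp} (telescoping $T_{W_n}^{(k)}-T_W^{(k)}$ into single powers reduces it to $(T_{W_n}-T_W)$ applied to a fixed signal). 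The same bound $\norm{T_{W_n}^{(k)}}\le 1$, together with the fixed finite weights $\bbH_{l,k}$ and the pointwise bound $|\sigma'|\le 1$ from the $1$-Lipschitz assumption, also yields the required uniform $L^2$ bounds on all layer and factor signals. With this in hand, the induction on the layer index is routine: the base case is the hypothesis $X_n\to X$; in the inductive step $U_{l,n}\to U_l$ by the convolution estimate and $X_{l,n}=\sigma(U_{l,n})\to\sigma(U_l)=X_l$ since $\sigma$ is $1$-Lipschitz, and the same reasoning propagates through the composed factor signals.

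The main obstacle is the treatment of the derivative factors $\sigma'(U_{l,n})$, which appear in the NTK but not in the forward pass. For a converging signal $Z_n\to Z$, I would write $\sigma'(U_{l,n})Z_n - \sigma'(U_l)Z = \sigma'(U_{l,n})(Z_n-Z) + (\sigma'(U_{l,n})-\sigma'(U_l))Z$; the first term is controlled by $|\sigma'|\le 1$, but the second requires $\sigma'(U_{l,n})\to\sigma'(U_l)$ when tested against $Z$. Passing to an a.e.-convergent subsequence of $U_{l,n}\to U_l$ and applying dominated convergence (with dominating function $2|Z|$) settles this whenever $\sigma'$ is continuous, which covers smooth activations such as $\tanh$ and the sigmoid. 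For an activation like ReLU, whose derivative jumps at $0$, this step is genuinely delicate and would need an additional hypothesis (e.g.\ that the limiting pre-activations place no mass on the kink set); this is the technical crux of the whole argument.
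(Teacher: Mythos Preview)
Your strategy coincides with the paper's: both exploit the finite outer-product structure of the WNTK, reduce operator-norm convergence to $L^2$-convergence of the layer and factor signals, and obtain the latter by induction on $l$ together with the estimate $\norm{T_{W_n}-T_W}\le\norm{\bbW_n-\bbW}_{L^2}$ and a telescoping argument for the higher powers $T_{W_n}^{(k)}-T_W^{(k)}$. The paper carries out exactly the same induction (its equations for $\norm{U_{1,n}-U_1}$ and $\norm{U_{l,n}-U_l}$ match your forward-pass step), so there is no meaningful methodological difference.

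The one place where you go beyond the paper is the treatment of the derivative factors $\sigma'(U_{l,n})$. The paper disposes of this in a single clause---``since $\sigma$ is Lipschitz, from the formula we see that it suffices to prove \ldots''---and never isolates the passage from $\sigma'(U_{l,n})Z$ to $\sigma'(U_l)Z$. You correctly identify this as the genuine technical crux, give a clean argument (a.e.\ subsequence plus dominated convergence) when $\sigma'$ is continuous, and flag that for ReLU one needs the limiting pre-activations to place no mass on the kink. That caveat is accurate and is not addressed in the paper; in this respect your write-up is more careful than the published proof on precisely the point where the latter is most elliptical.
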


The convergence of the GNTK to a limit object---the WNTK---has two important implications for machine learning on large-scale graphs. First, consider that we have a large graph $\bbG_N$ on which we want to predict signals $\bby_N$ using the GNTK, but that we do not have enough computational resources to compute the full-sized GNTK $\Theta(\bbx_N,\bbx_N';\bbA_N,\ccalH)$. This is expected, since calculating the kernel regression weights on this graph requires inverting a matrix with dimension proportional to $M$, the number of training samples, and the graph size $N$, see Sec. \ref{sec:ntk}. Thm. \ref{theorem1} implies that we can subsample the GNTK and the labels $Y_N$ on a smaller graph $\bbG_n$, $n \ll N$---as $\Theta(\bbx_n,\bbx_n';\bbA_n,\ccalH)$ and $\bby_n$ respectively---and fit the data to the GNTK on this smaller graph. Once the kernel regression weights are obtained, they can then be \textit{transferred} to make predictions on $\bbG_N$. Naturally, there will be an error associated with this transference; however, due to convergence, this error vanishes asymptotically in $n$. 

The precise convergence bounds for different graphs can be computed by plugging into \eqref{eq:nonasymp}. For instance, for graphs sampled from the graphon as in \cite{ruiz2021transferability}[Def. 1], the bound is given by
\begin{align*}
 \norm{\Theta(X_n,X_n') - \Theta(X,X')} \leq C (K^{4+L} \frac{2 A_w}{n} + K^{2+L}\frac{A_x}{n}).
\end{align*}
where $C$ is a constant, and $A_w$ and $A_x$ are the Lipschitz constants of the graphon and the graphon signal respectively.

In practice, transferability is useful in a variety of real-world applications, especially on large graphs. For instance, one may be interested in the learning dynamics of a GNN trained to provide ad recommendations on a growing, dense social media network. Our theoretical results show that one can calculate the GNTK on a smaller subgraph, e.g., the same social media network at an earlier time, and then transfer it to the large target graph with theoretical performance guarantees. This works because dense networks growing under the same underlying `rule' or `process' should have similar homomorphism densities, and converge to similar graphons; or, inversely, they can be thought of as being sampled from similar graphons. The more similar these limiting graphons are (under a metric such as the cut metric or the $L^2$ norm), the lower the error when transferring the GNTK across graphs.

A second implication of Thm. \ref{theorem1}, which is perhaps more important, is that GNTK convergence implies that a GNTK fitted on a smaller graph can be used to infer details about the \textit{training dynamics} of wide/overparametrized GNNs on larger graphs. This follows from Thm. \ref{theorem1} in conjunction with theoretical results by \citet{liu2020linearity}, which proves that in the infinite-width limit the training dynamics of general convolutional models (including GNNs) reduces to kernel ridge regression on the corresponding NTK; and empirical results by \citet{sabanayagam2021new} showing that even when the output layer is nonlinear, constancy of the GNTK can still be observed.

An example of property of a large-graph GNTK $\Theta(\bbx_N,\bbx_N'; \bbA_N, \ccalH)$ that can be estimated from a small-graph GNTK $\Theta(\bbx_n,\bbx_n'; \bbA_n, \ccalH)$ is its eigenvalue spectrum. As we show in Thm. \ref{theorem2}, the spectrum of $\Theta(X_n,X_n'; \bbW_n, \ccalH)$ converges to the the spectrum of $\Theta(X,X'; \bbW, \ccalH)$. The proof is deferred to Appendix \ref{appendix:eigenvalue_proof}.

\begin{theorem} \label{theorem2}
Let $\{\bbx_{i,n}\}_{i=1}^M$ be a set of $M$ signals on the graph $\bbG_n$, and $\{X_{i,n}\}_{i=1}^M$ the corresponding induced signals on the induced graphon $\bbW_n$. Assume that $\bbG_n \to \bbW$ [cf. \eqref{eqn:graph_convergence}] and $X_{i,n} \to X$ [cf. \eqref{eqn:signal_convergence}] for all $i$. Define the operators $\bbTheta_n $ and $\bbTheta$ where $[\bbTheta_n]_{ij} = \Theta(X_{i,n},X_{j,n};\bbW_n,\ccalH)$ and $[\bbTheta]_{ij} = \Theta(X_{i},X_{j};\bbW,\ccalH)$. Let $\lambda_p(T)$, $p \in \mbZ \setminus \{0\}$, denote the $p$th eigenvalue of a compact, self-adjoint operator $T$, with $\lambda_{-1} < \lambda_{-2} < \ldots \leq 0 \leq \ldots < \lambda_2 < \lambda_1$.
If the weights in $\ccalH$ are bounded, then, for all $p$,
\begin{equation*}
\lim_{n\to \infty} |\lambda_p(\bbTheta_n)-\lambda_p(\bbTheta)| \to 0 \text{.}
\end{equation*}
Moreover, the corresponding eigenspaces converge in the sense that the spectral projectors converge.
\end{theorem}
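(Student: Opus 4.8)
The plan is to treat $\bbTheta_n$ and $\bbTheta$ as honest operators on the product Hilbert space $\mathcal{L} = L^2([0,1])^{M}$, acting on $Y=(Y_1,\dots,Y_M)$ by $(\bbTheta Y)_i = \sum_{j=1}^M [\bbTheta]_{ij}Y_j$, and then to deduce everything from the operator-norm convergence already established in Theorem \ref{theorem1}. The first step is a structural observation that makes the spectral statement meaningful at all. Writing $g_{l,k}(X) := \partial_{h_{l,k}} f(X;\bbW,\ccalH) \in L^2([0,1])$ for the per-weight gradients in \eqref{eqn:wntk}, each block is a finite sum of rank-one operators, $[\bbTheta]_{ij} = \sum_{l,k} g_{l,k}(X_i)\otimes g_{l,k}(X_j)$. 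Collecting the gradients into $G_{l,k} = (g_{l,k}(X_1),\dots,g_{l,k}(X_M)) \in \mathcal{L}$ gives the global factorization $\bbTheta = \sum_{l,k} G_{l,k}\otimes G_{l,k}$, a finite sum of outer products on $\mathcal{L}$. Hence $\bbTheta$ (and identically $\bbTheta_n$) is finite-rank, so compact, self-adjoint, and positive semidefinite, which is exactly the hypothesis under which the ordered eigenvalues $\lambda_p$ in the statement are well defined.

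The second step upgrades Theorem \ref{theorem1} from a blockwise to a global convergence. For each of the finitely many pairs $(i,j)$, the hypotheses $X_{i,n}\to X_i$, $X_{j,n}\to X_j$ and $\bbG_n\to\bbW$ let me apply Theorem \ref{theorem1} to conclude $\norm{[\bbTheta_n]_{ij}-[\bbTheta]_{ij}}\to 0$. Bounding the norm of a block operator by the Frobenius aggregate of its block norms,
\begin{equation*}
\norm{\bbTheta_n - \bbTheta} \le \Big( \sum_{i,j=1}^M \norm{[\bbTheta_n]_{ij} - [\bbTheta]_{ij}}^2 \Big)^{1/2},
\end{equation*}
and using that $M$ is fixed, I obtain $\norm{\bbTheta_n-\bbTheta}\to 0$. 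Eigenvalue convergence is then immediate from the min-max (Courant–Fischer) characterization, which makes each ordered eigenvalue of a compact self-adjoint operator $1$-Lipschitz in the operator norm:
\begin{equation*}
|\lambda_p(\bbTheta_n) - \lambda_p(\bbTheta)| \le \norm{\bbTheta_n - \bbTheta} \quad \text{for all } p \in \mbZ \setminus \{0\},
\end{equation*}
applying the bound to $-\bbTheta$ to cover the negative branch. Combining the two displays yields $\lambda_p(\bbTheta_n)\to\lambda_p(\bbTheta)$.

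For the eigenspaces I would invoke analytic (Kato) perturbation theory. Fix a nonzero eigenvalue $\lambda_p(\bbTheta)$; since $\bbTheta$ is compact and $\lambda_p\neq 0$, it is isolated and of finite multiplicity, so there is a circle $\gamma_p\subset\mathbb{C}$ enclosing $\lambda_p(\bbTheta)$ and no other point of the spectrum. The eigenvalue convergence above guarantees that for $n$ large the only spectrum of $\bbTheta_n$ inside $\gamma_p$ is $\lambda_p(\bbTheta_n)$, with the same total multiplicity, so the Riesz projectors $P_p(\bbTheta)=(2\pi i)^{-1}\oint_{\gamma_p}(z-\bbTheta)^{-1}\,dz$ and $P_p(\bbTheta_n)$ are well defined. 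The second resolvent identity
\begin{equation*}
(z-\bbTheta_n)^{-1}-(z-\bbTheta)^{-1} = (z-\bbTheta_n)^{-1}(\bbTheta_n-\bbTheta)(z-\bbTheta)^{-1},
\end{equation*}
together with a lower bound on the distance from $\gamma_p$ to the spectra of $\bbTheta$ and $\bbTheta_n$ that is uniform in $n$, gives $\sup_{z\in\gamma_p}\norm{(z-\bbTheta_n)^{-1}-(z-\bbTheta)^{-1}}\le C\norm{\bbTheta_n-\bbTheta}\to 0$, and integrating over $\gamma_p$ yields $\norm{P_p(\bbTheta_n)-P_p(\bbTheta)}\to 0$.

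I expect the routine part to be the reduction to operator-norm convergence through Theorem \ref{theorem1}; the delicate part is the projector step. There the main obstacle is securing a spectral gap that is uniform in $n$, so that the contour $\gamma_p$ keeps separating $\lambda_p(\bbTheta_n)$ from the remaining spectrum for all large $n$ (which is what makes the resolvents uniformly bounded on $\gamma_p$). I would also flag that the projector conclusion must be restricted to the nonzero eigenvalues: these finite-rank positive semidefinite operators have an infinite-dimensional kernel, so the projector onto the zero eigenspace need not converge, whereas every nonzero $\lambda_p$ is isolated and its eigenspace is finite-dimensional, which is precisely the regime where the contour argument applies.
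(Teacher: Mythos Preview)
Your argument is correct and in fact more careful than the paper's own proof. The paper's route is to invoke a classical reference (Anselone, 1968) stating that operator-norm convergence of compact operators implies convergence of spectra and of spectral projectors, then to observe that $\bbTheta_n\to\bbTheta$ follows from Theorem~\ref{theorem1} and that $\bbTheta$ is compact. You instead make both of those steps explicit: you exhibit the finite-rank outer-product factorization $\bbTheta=\sum_{l,k}G_{l,k}\otimes G_{l,k}$ to get compactness (and positive semidefiniteness) directly, you pass from blockwise convergence to global operator-norm convergence via the block-Frobenius bound, and you replace the black-box citation by Weyl/Courant--Fischer for the eigenvalues and the Riesz projector/second resolvent identity for the eigenspaces.

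What this buys you: your compactness argument is sharper than the paper's (which argues only that $\|\Theta\|<\infty$, which by itself does not give compactness; your finite-rank observation is the clean reason), and you explicitly note the restriction of the projector statement to nonzero eigenvalues, which is the correct caveat for these finite-rank operators with infinite-dimensional kernel. What the paper's approach buys: brevity, since the cited perturbation result packages exactly the eigenvalue and projector convergence you spell out by hand. Your flagged ``delicate point'' about a uniform spectral gap on $\gamma_p$ is already handled by your own Lipschitz eigenvalue bound, so it is not a genuine obstacle.
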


This is an important result because, as proved by \citet{jacot2018neural}, the convergence of kernel gradient descent follows the kernel principal components. Thus, if the GNN $f(\ccalH)$ is sufficiently wide, and if $f(\ccalH_0)-f(\ccalH^\star)$ is aligned with the $i$th GNTK principal component, we can expect gradient descent to converge with rate proportional to $\lambda_i$, the $i$th eigenvalue of the GNTK. Thm. \ref{theorem2} shows that the GNTK eigenvalues converge to the eigenvalues of the WNTK. As such, we can estimate the speed of convergence of a large-graph GNN $f(\bbx_N; \bbA_N, \ccalH)$ from the eigenvalues of a small-graph GNTK $\Theta(\bbx_n,\bbx_n'; \bbA_n, \ccalH)$. A numerical example of this application of Thms. \ref{theorem1} and \ref{theorem2} is given in Sec. \ref{sbs:eig_sims}.

Spectral convergence of the GNTK is particularly relevant for practical applications on large graphs. Convergence of the spectrum implies convergence of the directions of learning. Intuitively, one can calculate the GNTK on the smaller graph, then transfer it to the large graph with the guarantee that the dominant directions of fastest convergence are being captured. We can learn GNN architectures that harness this property by pretraining on a smaller graph, and making adjustments on the large graph; or, by iterating between different resolutions of graph size.

\comment{
As a final comment, we point out that one can also deduce convergence of the GNTK spectrum to the WNTK spectrum. \sanjukta{The eigenvalues of the NTKs capture the rate of convergence along the directions that are captured by the eigenvectors. For instance, the eigenvector corresponds to the leading eigenvalue is the direction of fastest convergence.}\red{terminology question: should we use eigenfunctions instead for the WNTK?}
\sanjukta{still working on this. Do you happen to know the citation below off the top of your head?}

\begin{theorem}
For a bounded, linear operator $Q:L^2 \to L^2$ denote $\sigma(Q)$ the spectrum of $Q$.
If $W$ is Lipschitz and $W_n \to W$ in the Lipschitz metric and the coefficients $h_{k,n}$ are bounded uniformly, then for all $X,X'$ and sequences such that $\norm{X_n - X} \to 0$ and $\norm{X_n' - X'} \to 0$,
\begin{align*}
\sup_{z\in \sigma(\Theta(X,X'))} \inf_{z' \in \sigma(\Theta(X_n,X'_n))} |z-z'| \to 0.   
\end{align*}
\end{theorem}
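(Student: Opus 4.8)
The plan is to read this as a corollary of the operator-norm convergence already established for the kernels, so the proof splits cleanly in two: first upgrade the present hypotheses to operator-norm convergence $\norm{\Theta(X_n,X_n') - \Theta(X,X')} \to 0$, and then pass from operator-norm convergence to the stated one-sided Hausdorff convergence of the spectra. Throughout I write $\Theta(X,X')$ for $\Theta(X,X';W,\ccalH)$ and recall that $\Theta(X,X')$ is a finite sum, over layers $j$ and taps $k$, of rank-one operators, hence finite-rank and compact, with nonzero spectrum consisting of finitely many isolated eigenvalues.

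\textbf{Step 1 (operator-norm convergence).} Under these hypotheses this is cleaner than in Theorem~\ref{theorem1}: since $W$ is Lipschitz and $W_n \to W$ in the Lipschitz metric, the kernels converge in $L^2([0,1]^2)$, so the integral operators obey $\norm{T_{W_n}^{(k)} - T_W^{(k)}} \to 0$ for each $k$ (the $k=1$ case is immediate; the general case follows from the telescoping identity $T_W^{(k)} = T_W T_W^{(k-1)}$ and the uniform bound $\norm{T_{W_n}} \le 1$). I would then exploit the rank-one structure: writing each term of the WNTK as $\phi_{j,k}\otimes\psi_{j,k}$ with the factor functions built by interleaving the bounded convolutions $T_W^{(k)}$, $T_{H_l}$ with pointwise multiplications by $\sigma'(U_l)$, and using $\norm{\phi_n\otimes\psi_n - \phi\otimes\psi} \le \norm{\phi_n}\norm{\psi_n-\psi} + \norm{\phi_n-\phi}\norm{\psi}$, it suffices that every factor function converge in $L^2$. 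This follows by induction on the layers as in Theorem~\ref{theorem1}, using the $1$-Lipschitz bound $|\sigma'|\le 1$ and, for the terms carrying a factor $\sigma'(U_{l,n})-\sigma'(U_l)$, a subsequence-plus-dominated-convergence argument (every subsequence of $U_{l,n}\to U_l$ in $L^2$ has a further subsequence converging a.e.). Summing the finitely many terms yields the operator-norm convergence.

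\textbf{Step 2 (spectra from the operator norm).} I would prove the stated bound by contradiction through the resolvent. Fix $z_0 \in \sigma(\Theta(X,X'))$ and suppose, along a subsequence, $\mathrm{dist}\big(z_0,\sigma(\Theta(X_n,X_n'))\big)\ge\delta>0$. Then $z_0$ lies in the resolvent set of $\Theta(X_n,X_n')$; if the resolvents $(\Theta(X_n,X_n')-z_0)^{-1}$ are uniformly bounded, then letting $n\to\infty$ and using Step~1 together with the fact that a norm-limit of uniformly boundedly-invertible operators is itself invertible shows that $(\Theta(X,X')-z_0)^{-1}$ exists, contradicting $z_0\in\sigma(\Theta(X,X'))$. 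This gives $\sup_{z\in\sigma(\Theta(X,X'))}\inf_{z'\in\sigma(\Theta(X_n,X_n'))}|z-z'|\to 0$.

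\textbf{Main obstacle.} The crux is the uniform resolvent bound $\norm{(\Theta(X_n,X_n')-z_0)^{-1}}\le\delta^{-1}$, which holds only once the operators are normal (equivalently self-adjoint here), since only then does the resolvent norm equal the reciprocal distance to the spectrum. This is exactly the symmetric kernel/Gram setting of Theorem~\ref{theorem2}; for $X\ne X'$ the cross-kernel $\Theta(X,X')$ need not be self-adjoint, and this lower-semicontinuity direction of spectral convergence can genuinely fail under operator-norm perturbations owing to pseudospectral effects. I would therefore either restrict to the self-adjoint case—where Step~2 collapses to the eigenvalue continuity $|\lambda_p(\Theta(X_n,X_n'))-\lambda_p(\Theta(X,X'))|\le\norm{\Theta(X_n,X_n')-\Theta(X,X')}$ furnished by Weyl's inequality, recovering Theorem~\ref{theorem2}—or exploit the explicit finite-rank form to track the at most $KL$ nonzero eigenvalues directly through their finite-dimensional determinantal equation, whose coefficients depend continuously on the converging factor functions.
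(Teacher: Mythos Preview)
Your Step 1 is essentially the paper's Theorem \ref{theorem1} redone under the Lipschitz hypothesis, and your opening remark that each $\Theta(X,X')$ is a finite sum of rank-one operators, hence finite-rank and compact, is exactly right. The paper's proof of this statement is much shorter and hinges entirely on that compactness: it argues (via the Lipschitz bound, though your finite-rank observation already suffices) that the limiting and induced WNTKs are compact operators on $L^2$, and then simply invokes a general spectral-approximation result for compact operators converging in norm (the same Anselone result cited in the proof of Theorem \ref{theorem2}). That result delivers convergence of the spectrum and of the finite-dimensional spectral projectors via Riesz projections, and crucially does \emph{not} require self-adjointness: isolated eigenvalues of finite algebraic multiplicity of a compact operator are stable under norm perturbation, and the point $0$ lies in every $\sigma(\Theta_n)$ trivially since the operators are finite-rank on an infinite-dimensional space.

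So the ``main obstacle'' you identify is real for your resolvent argument but is not an obstacle for the statement itself. Your route via the resolvent bound $\norm{(\Theta_n - z_0)^{-1}} \le \delta^{-1}$ is genuinely blocked by non-normality, as you say; the paper's route via compactness plus the cited perturbation theorem bypasses this entirely. Your determinantal fallback for the finite-rank case would also work, but once compactness is on the table it is unnecessary machinery. In short: your argument is sound up to the self-adjoint case and correctly diagnoses where it stalls; the paper gets the full non-self-adjoint statement in one line by citing the right black box, which your own finite-rank observation already unlocks.
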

\begin{proof}
Since $W$ and $W_n$ are Lipschitz, it follows that 
\begin{align*}
\norm{\partial_u T_W} \leq \norm{\nabla W}_{L^2} \\ 
\norm{\partial_u T_{W_n}} \leq \norm{\nabla W_n}_{L^2},  
\end{align*}
and therefore the $T_W,T_{W_n}$ are compact operators. 
Since the coefficients are bounded, it follows from the formulae above that the graphon-NTK and the induced graphon-NTKs are all compact operators on $L^2$. 
From \red{cite?!}, the convergence of the spectrum (and in fact also of all finite dimensional spectral projectors) follows. 
\end{proof} 
}

\comment{
\begin{corollary}
Let $\bbW$ be a Lipschitz continuous graphon and $X,X'$ be arbitrary, Lipschitz continuous graphon signals. Suppose that $\set{\bbA_n}_{n=1}^\infty$ be a sequence of generic GSO's sampled from $\bbW$ of dimension $\reals^{n \times n}$, and let $\bbW_n$ be the associated induced graphons and $X_n,X_n'$ be the induced graphon signals associated to $X,X'$ respectively.  
Then, for any $L$-layer WNN with $K$ fixed (finite) and fixed filter coefficients $\ccalH$, the associated NTK's converge in the operator norm: 
\begin{align*}
\lim_{n \to \infty}\norm{\Theta_W(X,X') - \Theta_{W_n}(X_n,X'_n)} = 0. 
\end{align*}
\label{corollary}
\end{corollary}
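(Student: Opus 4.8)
The plan is to exploit the fact that the WNTK is a \emph{finite sum of rank-one operators}. Reading off the expansion of $\Theta(X,X';\bbW,\ccalH)$, each of its $KL$ terms has the form $a_{j,k}\otimes a'_{j,k}$, where $a_{j,k}=\partial_{h_{l_j,k}}f(X;\bbW,\ccalH)\in L^2([0,1])$ and $a'_{j,k}$ is the analogous function built from $X'$. Since the outer product satisfies $\norm{a\otimes b}=\norm{a}\,\norm{b}$ and $\norm{a_n\otimes b_n-a\otimes b}\le \norm{a_n-a}\,\norm{b_n}+\norm{a}\,\norm{b_n-b}$, operator-norm convergence of the WNTK reduces, via the triangle inequality over the finitely many terms, to showing that each factor $a_{j,k,n}:=\partial_{h_{l_j,k}}f(X_n;\bbW_n,\ccalH)$ converges to $a_{j,k}$ in $L^2$, together with uniform boundedness of all factors. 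This is the payoff of the rank-one structure: the operator norm of the kernel is controlled entirely by $L^2$ norms of a fixed finite collection of functions, so I never need operator-norm convergence of the convolution operators themselves.

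To establish $L^2$ convergence of the factors I would assemble three building blocks and then chain them. (i) \emph{Convolution convergence}: if $g_n\to g$ in $L^2$ then $T_{W_n}^{(k)}g_n\to T_W^{(k)}g$ and $T_{H_m,n}g_n\to T_{H_m}g$, where $T_{H_m,n}=\sum_k h_{m,k}T_{W_n}^{(k)}$ is the induced layer operator; this follows by splitting $T_{W_n}^{(k)}g_n-T_W^{(k)}g=T_{W_n}^{(k)}(g_n-g)+(T_{W_n}^{(k)}-T_W^{(k)})g$, bounding the first term by $\norm{g_n-g}$ (each $T_{W_n}$ has norm at most $1$ because $\bbW_n$ is $[0,1]$-valued), and telescoping $(T_{W_n}^{(k)}-T_W^{(k)})g$ in $k$ down to the base case $\norm{(T_{W_n}-T_W)h}\to0$ for fixed $h$, which is the graphon-convolution convergence of \cite{ruiz2020graphonsp}. (ii) \emph{Forward-pass convergence}: by induction on the layer index, $X_{l,n}\to X_l$ and $U_{l,n}\to U_l$ in $L^2$, using (i) to pass convergence through the convolution and $1$-Lipschitzness of $\sigma$ to pass it through the activation. (iii) \emph{Derivative multipliers}: for fixed $m$, if $g_n\to g$ in $L^2$ then the pointwise product $\sigma'(U_{m,n})\,g_n\to\sigma'(U_m)\,g$ in $L^2$.

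With these in hand, the assembly is mechanical. Each factor $a_{j,k,n}$ is obtained by applying to the layer input $X_{l_j-1,n}$ a fixed-length alternating composition of convolution operators $T_{W_n}^{(k)},T_{H_m,n}$ and multipliers $\sigma'(U_{m,n})$. All of these operators are uniformly bounded (convolutions by $K\max_{m,k}|h_{m,k}|$, multipliers by $1$) and the forward signals have uniformly bounded $L^2$ norm, so I would push convergence through the composition one operator at a time: at each step, replacing $g_n\to g$ by $A_n g_n\to Ag$ with $A_n\in\{T_{W_n}^{(k)},T_{H_m,n},\sigma'(U_{m,n})\cdot\}$ preserves $L^2$ convergence by (i) and (iii). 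Hence $a_{j,k,n}\to a_{j,k}$, and summing the finitely many rank-one terms through the bound of the first paragraph yields $\norm{\Theta(X_n,X_n';\bbW_n,\ccalH)-\Theta(X,X';\bbW,\ccalH)}\to0$.

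The step I expect to be the main obstacle is (iii), the convergence of the multipliers $\sigma'(U_{m,n})$, precisely because the NTK---unlike the forward WNN map already handled in prior work---differentiates through the activation and therefore depends on $\sigma'$, which for common choices such as ReLU is discontinuous. The clean route is a subsequence/dominated-convergence argument: $U_{m,n}\to U_m$ in $L^2$ gives, along any subsequence, a further subsequence converging a.e., so $\sigma'(U_{m,n})\to\sigma'(U_m)$ a.e.\ provided $\sigma'$ is continuous (or continuous off a null set that $U_m$ avoids almost everywhere), and $|\sigma'|\le1$ lets dominated convergence upgrade this to $L^2$ convergence of the products; since the candidate limit is the same along every subsequence, the full sequence converges. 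Accommodating a discontinuous $\sigma'$ thus forces either a mild smoothness/non-degeneracy hypothesis on $\sigma$ and the pre-activations, or a direct estimate on the measure of the set where $U_{m,n}$ and $U_m$ straddle the kink, and this is where the bulk of the technical care lies.
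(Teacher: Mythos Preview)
Your strategy and the paper's coincide at the core: both decompose the WNTK into its finitely many summands and reduce operator-norm convergence to layer-wise $L^2$ convergence of the constituent functions, with the forward pass $U_{l,n}\to U_l$ handled by induction on depth and the convolution operators handled by telescoping in $k$ together with the base estimate $\norm{T_{W_n}-T_W}\le\norm{\bbW_n-\bbW}_{L^2}$. The differences are organizational and, in one place, substantive. First, you package each summand as a rank-one operator $a\otimes a'$ and invoke $\norm{a_n\otimes b_n-a\otimes b}\le\norm{a_n-a}\,\norm{b_n}+\norm{a}\,\norm{b_n-b}$; this is a cleaner bookkeeping device than the paper's direct triangle inequality on $\Theta(X,X')Y$, and it makes explicit that only strong ($L^2$-on-fixed-vectors) convergence of the convolutions is required. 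The paper instead proves the stronger operator-norm statement $\norm{T_{W_n}^{(k)}-T_W^{(k)}}\le k\,\norm{\bbW_n-\bbW}_{L^2}$, which costs nothing extra and buys a non-asymptotic rate. Second, and more interestingly, you isolate a step the paper's proof treats casually: the convergence of the pointwise multipliers $\sigma'(U_{m,n})$. The paper writes ``since $\sigma$ is Lipschitz'' and proceeds, but Lipschitzness only gives $|\sigma'|\le1$, not that $\sigma'(U_{m,n})\,g_n\to\sigma'(U_m)\,g$ in $L^2$. Your subsequence/dominated-convergence argument is the right way to close this, and your caveat about ReLU---that one needs either continuity of $\sigma'$ or that the level set $\{U_m=0\}$ be null---is a genuine hypothesis that the paper does not make explicit. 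In short, your route is essentially the paper's, organized more transparently around the rank-one structure and more scrupulous about the one delicate point.
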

}

\comment{
\subsection{Convergence for different random graph models}

An approximate estimate depends on what kind of random graph is being generated.
For a weighted graph, Propositions 13 in the previous paper gives with probability at least $(1-2\chi_1)(1-\chi_2)$ for $n \geq 4/\chi_2$, 
\begin{align*}
\| X - X_n  \| & \leq \frac{A_x}{n} \log \left( \frac{(n+1)^2}{\log(1-\chi_1)^{-1}} \right) \|X\|
\end{align*}
(where $A_x$ is the Lipschitz constant of $X$) and 
\begin{align*}
\| W_n - W \| & \leq \frac{2A_w}{n} \log \left( \frac{(n+1)^2}{\log(1-\chi_1)^{-1}} \right) . 
\end{align*}
Let us now assume that $n$ is so large that
\begin{align*}
\frac{A_x}{n} \log \left( \frac{(n+1)^2}{\log(1-\chi_1)^{-1}} \right) < 1. 
\end{align*}
In the weighted graph case, it seems like we then have (assuming $X,X'$ have the same Lipschitz constants)

For stochastic graphs, the paper is a little less clear about exactly what the estimate is, but I feel like similar estimates, possibly up to another logarithm, hold due to Proposition 14.

\noindent \red{L: I like the proof steps above, linking with the bounds from the transferability paper which avoids all the ugly triangle inequalities and Cauchy-Schwarz. However, I think we still need to polish this section and add a bit more detail.}

\red{TO DO: (1) write estimates for $\| X - X_n  \|$ and $\| W - W_n  \|$  for stochastic graphs? (2) Evaluate a quantitative estimate for Eqn \ref{eq:NTK_matrix} for 2 layers by adding and subtracting (8 terms total) and using triangle inequality (since $U_l = \sigma X_l$ and we have bounds on $U_l$ for graph and graphon in terms of $W-W_n$. (3) Once we have this estimate, trivially write transferability  $\norm{\Theta_{n1} - \Theta_{n2}}$ using triangle inequality (Similar to how it's done below).}

\noindent \red{L: Will get to (1) soon.}

}

\section{Numerical Results}
\label{sec:numerical}


\begin{table}[t]
    \caption{Movie recommendation results. Relative difference between the test MSE achieved by the GNTK fitted on the $n$-node graph when used for prediction the $n$-node graph, and the test error achived by the same GNTK when used for predicition on the $N$-node graph, where $N=1000$.}
    \centering
    \begin{tabular}{l|ccccc}
    \hline
        Number of nodes & {100} & {200} & {300} & {400} & {500} \\ \hline
        Mean (\%) & $6.80$ & $0.78$ & $0.09$ & $0.03$ & $0.03$ \\ 
        Std. dev. (p.p.) & $3.51$ & $1.16$ & $0.03$ & $0.01$ & $0.02$ \\ \hline
    \end{tabular}
    \label{tb:movie_results}
\end{table}


\begin{figure*}[t]
\centering
\includegraphics[width=0.33\linewidth]{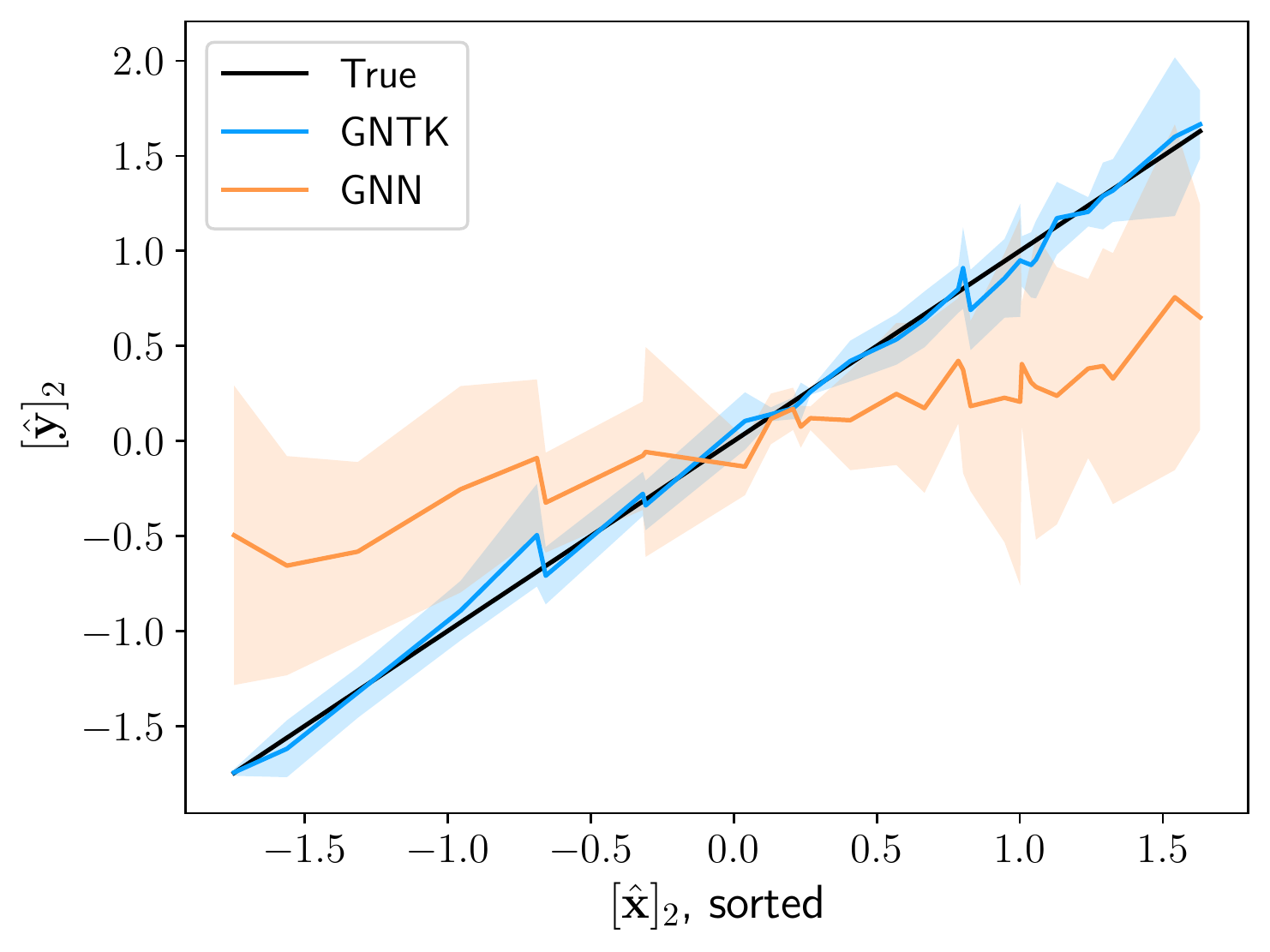}
\includegraphics[width=0.33\linewidth]{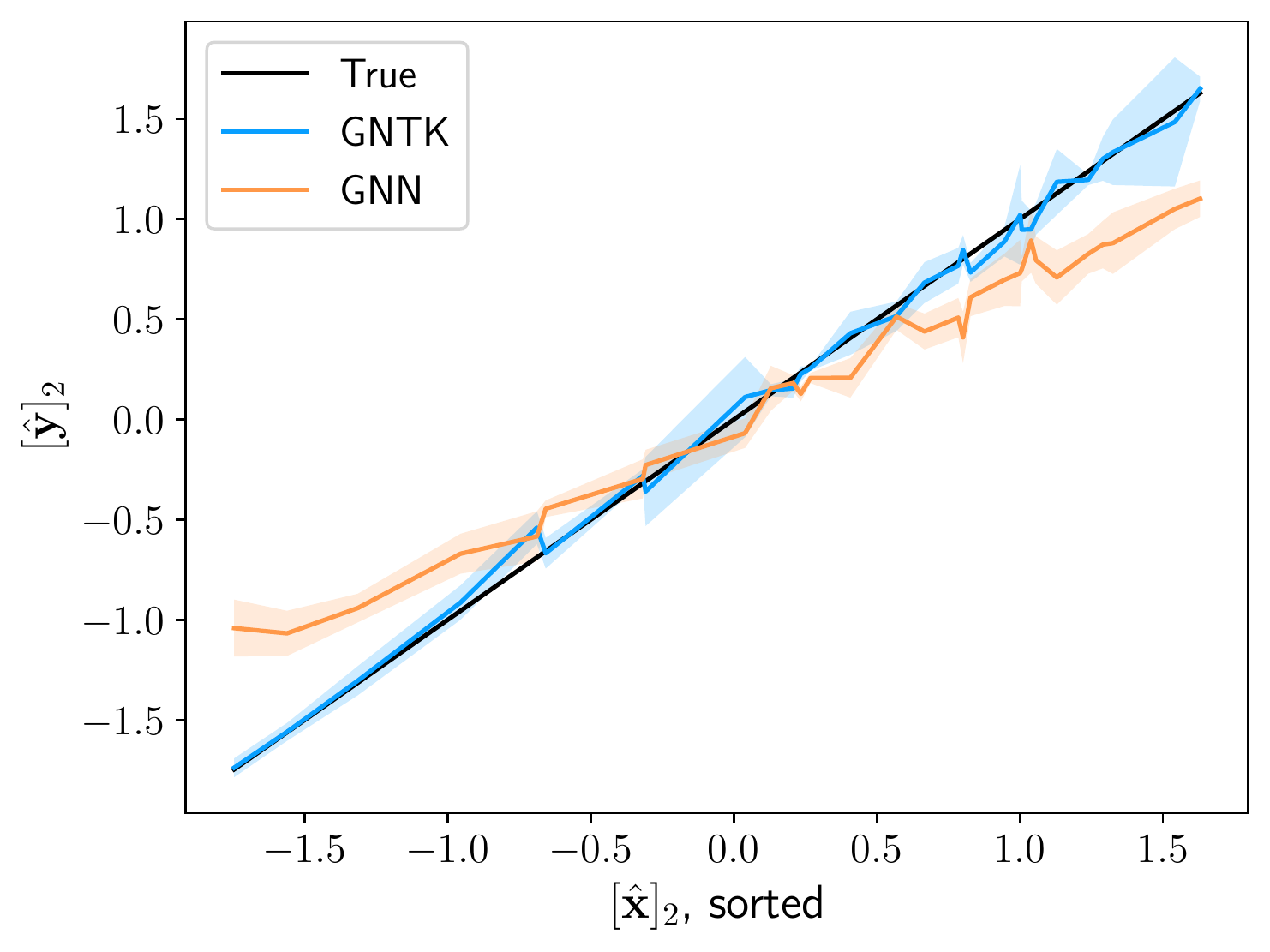}
\includegraphics[width=0.33\linewidth]{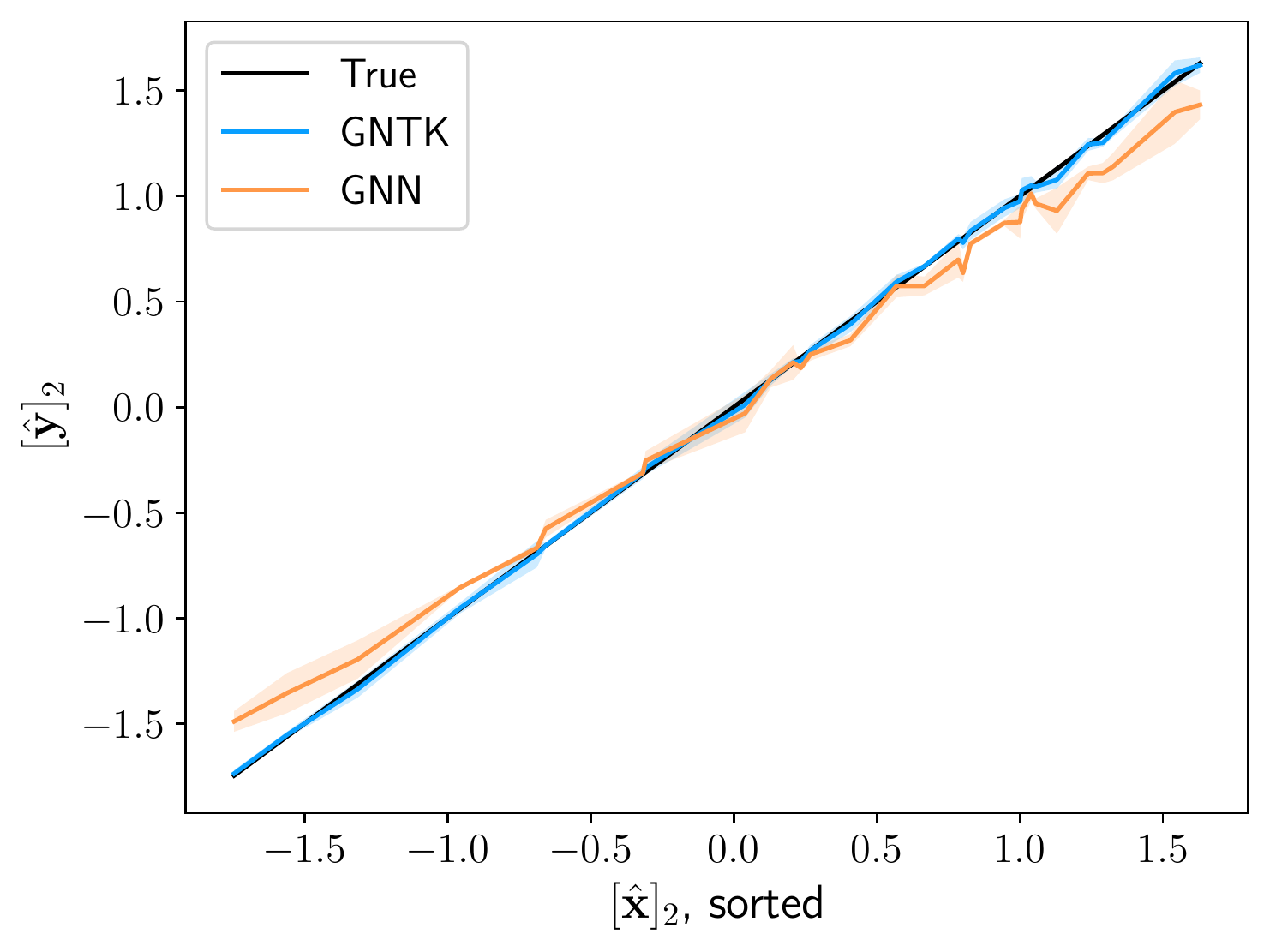}
\caption{Projections of the inputs and outputs of the GNTK and GNN, and of the target or true labels onto the second eigenvector of the adjacency matrix of a $80$-node SBM graph for widths $F=10$ (left), $F=50$ (center) and $F=250$ (right).}
\label{fig:width}
\end{figure*}

In the following, we illustrate the convergence of the GNTK to the WNTK on a simulated opinion dynamics model on random graphs; on a movie recommendation graph; and on citation networks (Sec. \ref{sbs:sims_conv}). In the opinion dynamics problem, we further vary the width of the network to analyze the large width behavior of the GNTK and how it relates to the behavior of the GNN (Sec. \ref{sbs:wide_sims}). We conclude with an example of how the convergence of the GNTK can be used in practice to estimate the eigenvalues of the GNTK, and thus the speed of GNN training along its principal components, on large-scale graphs (Sec. \ref{sbs:eig_sims}).

\noindent \textbf{Opinion dynamics.} This is a node-level task modeling the outcomes of a mathematical model for studying the evolution of ideologies, affiliations, and opinions in society \cite{lorenz2007continuous}, including topics of important practical interest such as political ideologies and misinformation spread. On an undirected $n$-node graph $\bbG = (\ccalV,\ccalE)$, we consider an opinion dynamics process $\bbx_t \in \reals^n$. The node data $[\bbx_t]_i$ is the opinion of individual $i$ under a standard bounded-confidence model described by \citet{hegselmann2002opinion}:
\begin{equation} \label{eqn:op_dyn}
[\bbx_t]_i = \sum_{j \in \ccalS_{i,t-1}} c [\bbx_{t-1}]_j.
\end{equation}
where  $c>0$ is the so-called influence parameter and $\ccalS_{i,t-1} = \ccalN_i \cap \ccalX_{i,t-1}$ is the intersection of $\ccalN_i = \{j \in \ccalV\ |\ (i,j) \in \ccalE\}$, the neighborhood of $i$, and $\ccalX_{i,t} = \{j \in \ccalV\ |\ |[\bbx_t]_j-[\bbx_t]_i| \leq \epsilon\}$, the set of nodes whose opinion at time $t$ diverges by at most $  0\leq \epsilon \leq 1$ of the opinion of node $i$. This model reduces to the classic De-Groot \cite{degroot1974reaching} opinion model for $\epsilon=1$. We fix $\epsilon=0.3$ and $c=0.1$. 

After drawing the initial opinions $\bbx_0$ from a Gaussian distribution with $\bbmu = \boldsymbol{0}$ and $\bbSigma = 2\bbI$, we run the process \eqref{eqn:op_dyn} until convergence (for at most $T=1000$ iterations) to obtain the final opinions $\bbx_T$. The goal of the learning problem is then to predict $\bby=\bbx_T$ from $\bbx=\bbx_0$. We fix the training set size to $300$ samples, and use $30$ samples for both validation and testing. 

\noindent \textbf{Movie recommendation.} In this problem, the graph is a movie similarity network, the data are existing user ratings (ranging from 1 to 5), and the task is to use the existing ratings to fill in the missing ratings for movie recommendation. The movie similarity graph is a dense correlation graph built by computing pairwise correlations between the rating vectors of the movies rated in the MovieLens-100k dataset \cite{harper16-movielens}. We only consider movies with at least 10 ratings, and the large graph has $N=1000$ nodes sampled at random from the set of admissible movies. The learning problem is to predict the ratings of the movie ``Contact''. We use a 90-10 train-test split of the data, and build the graph using only the training samples. Due to memory limitations, we only use 25\% of the users (samples) to compute the GNTK.

\noindent \textbf{Citation networks.} The third problem we consider is a node classification problem on the Cora, CiteSeer and PubMed networks, where nodes represent documents and undirected edges represent citations between papers in either direction.
The nodes a bag-of-words representations are grouped in $\bbX \in \{0,1\}^{n \times F}$ and each node is associated with one of $C$ classes. We consider the networks and {train-test-splits} from the full distribution \cite{bojchevski2017deep} available in PyTorch Geometric, but only sample $F=1000$ features for CiteSeer and $N=10000$ nodes for PubMed due to memory limitations. {Through studying citation networks, we illustrate the empirical manifestation of our results on real-world graphs. Although citation networks are not best modeled by graphon limits, they certainly have limits, and our results suggest that convergence is a global property of GNTKs and graph limits.}

\noindent \textbf{Architectures and experiment details.} In all experiments, the GNNs have $L=1$ layer \eqref{eqn:gcn_layer} with ReLU nonlinearity followed by a perceptron layer. For opinion dynamics, $K=2$, and for movie recommendation, $K=5$. In both cases, we consider the MSE and fit the GNTK using linear regression. In the citation network experiment, the GNN architecture has $K=2$ and includes a softmax layer followed by argmax; and we consider the cross-entropy (CE) and fit the GNTK using logistic regression. All reported results are averaged over $5$ and $10$ realizations for opinion dynamics and movie recommendation/citation networks respectively. Additional architecture and experiment details are listed in each subsection. The code can be found in \href{https://github.com/luanaruiz9/wntk.git}{this} repository. All experiments were run on a NVIDIA RTX A6000 GPU.

\subsection{Convergence} \label{sbs:sims_conv}

To visualize the convergence of the GNTK, we fit a GNTK with $F_1=10$ in the opinon dynamics and citation network experiments, and $F_1=16$ in the recommendation experiment, on the training set of a small $n$-node graph; and transfer this GNTK, without retraining, to predict the outputs on the test set of both the same $n$-node graph and a larger $N$-node graph. To visualize convergence, we plot in Fig. \ref{fig:convergence} the absolute difference between the test errors attained on the $n$-node and the $N$-node graph (normalized by the test error on the $N$-node graph) as a function of the size of the smaller graph $n$, for both the opinion dynamics (left) and node classification (right) experiments. The results of the recommendation experiment are reported in Table \ref{tb:movie_results}.


\begin{figure}[t] 
\centering
\includegraphics[width=\linewidth]{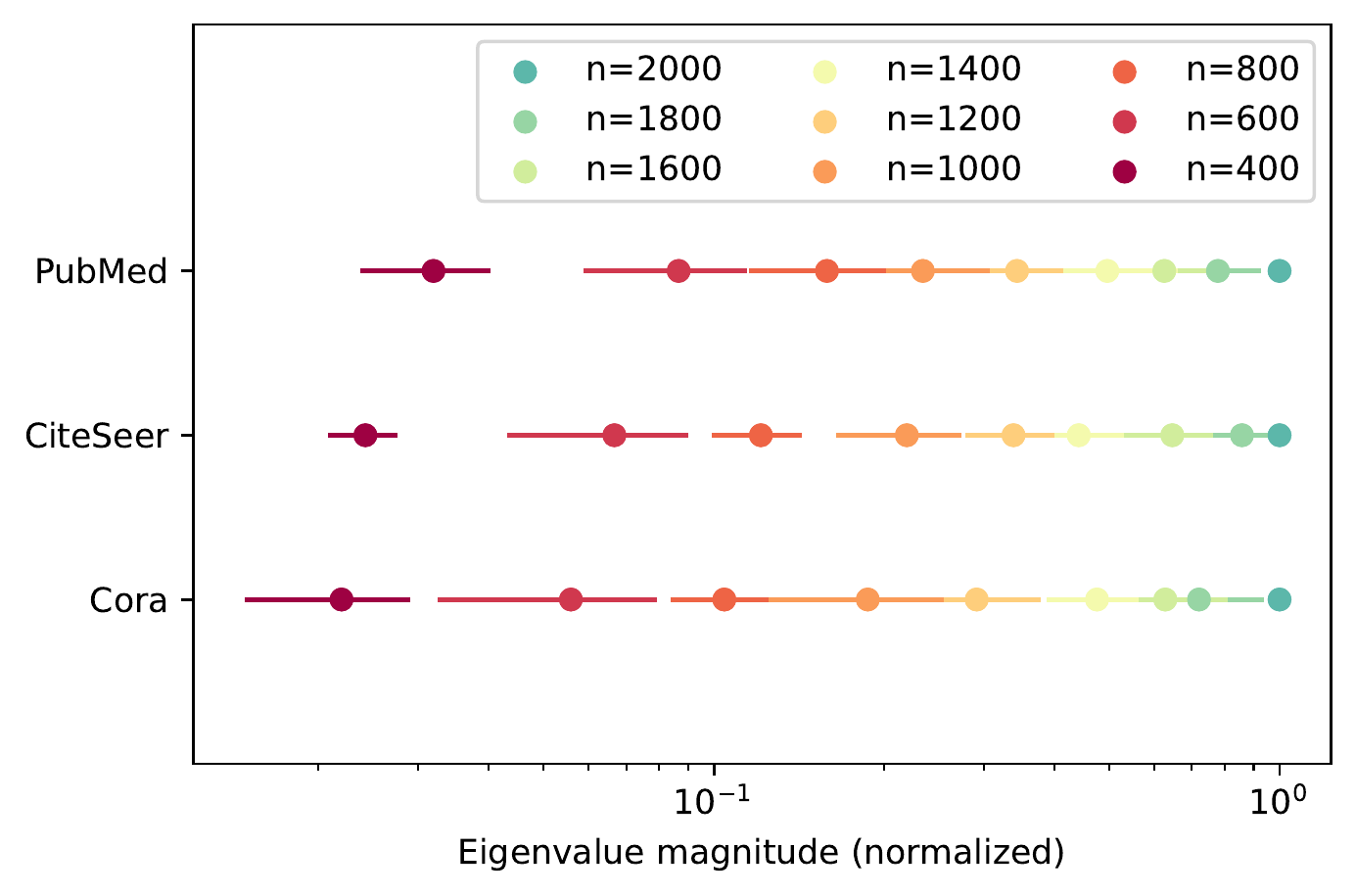}
\caption{The leading eigenvalue of the GNTK with width $F=10$ for Cora, CiteSeer and PubMed as a function of the graph size $n$.}
\label{fig:eigs}
\end{figure}

In opinion dynamics, $N=300$. We consider two types of graphs, corresponding to two graphon families: (a) a symmetrized geometric $k$-nearest neighbor graph where nodes are drawn at random from a $50 \times 50$ square and $k=n/10$; and (b) a stochastic block model (SBM) graph with intra-community probability $p=0.1$ and inter-community probability $q=0.05$. As $n$ increases from $20$ to $100$, we observe that the difference between the MSEs for the $n$-node and the $N$-node graphs decrease steadily 
from $\sim 70$\% to $\sim 20$\% on the geometric graphs and from $\sim 90$\% to $\sim 40$\% on the SBM graphs. 

In the recommendation experiment, $N=1000$. As $n$ increases from $100$ to $500$, we see from Table \ref{tb:movie_results} that the average test MSE difference on the large graph decreases from $6.8$\% to $0.3$\%, and so does the standard deviation, thus confirming our theoretical findings. In particular, $n=300$ is enough to achieve an MSE difference of less than $1$\%.

In the node classification experiment, $N=2708$ for Cora, $N=3327$ for CiteSeer, and $N=10000$ for PubMed (sampled from the $19717$ nodes in the original due to memory limitations). For all datasets, the difference between the CE loss for the $n$-node and the $N$-node graph also decreases as $n$ increases.
This can be interpreted to mean that node classification on these citation networks depends largely on local information, so beyond a critical size $n$, the transference error saturates. An interesting point to make is that although these citation networks are not dense (i.e., are not best modeled by a graphon), we still see the empirical manifestation of our theoretical results.

\subsection{Wide Network Behavior} \label{sbs:wide_sims}

Next, we analyze the effect of width on both the GNN and the GNTK when they are trained/fitted on a small graph and transferred to a large graph. 
From the NTK analysis by \citet{jacot2018neural}, as the GNN width increases we expect (i) the GNTK and the GNN to exhibit less variance over multiple weight initializations and (ii) the GNN outputs to approach those of kernel regression with the GNTK. 

For the opinion dynamics experiment on SBMs, we consider three GNNs with widths (number of features) $F^{(1)}_1=10$, $F^{(2)}_1=50$, and $F^{(3)}_1=250$ supported on the same $80$-node graph. Using their initial weights, we construct the corresponding GNTKs. We train the three GNNs by minimizing the MSE loss over $20$ epochs and with batch size $32$, using ADAM \cite{kingma17-adam} with learning rate $1e^{-3}$ and weight decay $5e^{-3}$. Simultaneously, we fit the GNTKs to the training set, 
and then transfer both the GNNs and the corresponding GNTKs to the $N$-node graph and compute their outputs on the test set.

In Fig. \ref{fig:width}, we plot the projection of the outputs $\bby$ onto the second graph eigenvector $\bbv_2$, $[\hby]_2=\bbv_2^T\bby$ against the projections of the inputs onto the same vector $[\hbx]_2 = \bbv_2^T\bbx$ (sorted in ascending order) for both the GNNs and the GNTKs for $F^{(1)}_1=10$ (left), $F^{(2)}_1=50$ (center), and $F^{(3)}_1=250$ (right). The second graph eigenvector was chosen as it captures the community structure, however, note that the same behavior was observed upon projecting onto other eigenvectors, as shown in Appendix \ref{appendix:wide}. The results are averaged over $5$ GNN initializations, with the solid lines representing the mean and the shaded areas representing the standard deviation. The behavior is as expected: as the width increases, the GNN and the GNTK have smaller variance in behavior for different weight initializations, and the GNN and GNTK curves align. In Appendix \ref{appendix:wide}, we include additional plots displaying the mean and variance over different initializations as we vary the graph size.

\subsection{Application: Eigenvalue Convergence} \label{sbs:eig_sims}

In this section, we elucidate an important application of the convergence of the GNTK. \citet{jacot2018neural} shows that the convergence of kernel gradient descent follows the kernel principal components. Therefore, if the GNN is sufficiently wide, and $f(\bbX_n;\bbA_n,\ccalH_0)-f(\bbX_n;\bbS_n,\ccalH^\star)$ is aligned with the $i$th GNTK principal component, we can expect gradient descent to converge with rate proportional to $\lambda_i$, the $i$th eigenvalue of the GNTK\footnote{Theoretically, constancy of the NTK in the infinite-width limit and equivalence between GNN training and kernel regression only holds for architectures with linear output layer \cite{liu2020linearity}. Although this is not the case of the GNNs used here, GNTKs associated with GNNs with nonlinear output layers seem to exhibit constancy empirically \cite{sabanayagam2021new}}. 

Recall that we prove, in Thm. 2, that the spectrum of the GNTKs converge in the graphon limit. In this context, a simple application of the convergence of the GNTK spectrum is as follows: say that we know that $f(\bbX_N;\bbA_N,\ccalH_0)-f(\bbX_N;\bbA_N,\ccalH^\star)$ is aligned with the $p$th GNTK eigenvector, and we want to estimate its speed of convergence, but the kernel $\Theta(\bbX_N,\bbX_N';\bbA_N,\ccalH)$ is too expensive to compute. We could in this case sample a graph $\bbG_n$, $n \ll N$ from the induced graphon $\bbW_N$ and calculate $\Theta(\bbX_n,\bbX_n';\bbA_n,\ccalH)$. From Thm. \ref{theorem2}, $\lambda_{n,p}$ converges to $\lambda_{N,p}$. Hence, for large enough $n$, $\lambda_{n,p}$ should provide a good approximation of $\lambda_{N,p}$.

To illustrate this empirically, in Fig. \ref{fig:eigs} we plot the dominant eigenvalues of the GNTK associated with a GNN with $F_1=10$ features supported on graphs of increasing size $n$ sampled from the Cora, CiteSeer, and PubMed networks (averaged over 10 initializations). As $n$ grows, we see that the difference between the dominant eigenvalues of consecutive GNTKs reduces. I.e., for each citation dataset, the value of the dominant eigenvalue converges as $n$ increases, indicating that the speed of convergence of the GNN along the dominant GNTK eigenvector converges as the graph grows. For all but the PubMed dataset, the $1600$-node graph gives an approximation of $\lambda_{N,1}$ that is over $70$\% good.

\section{Conclusions}
\label{sec:conclusion}

In this paper, we define WNTKs as the limiting objects of GNTKs, and study how the GNTK evolves as the underlying graph of the corresponding GNN grows. We show that GNTKs converge to the WNTK and that their spectra also converges, thus providing theoretical insight into how the learning dynamics of a GNN evolves as an important dimension---the graph size---grows. In practice, these convergences imply that one can transfer the GNTK of a smaller graph to solve the same task on a larger graph without any further optimization, with theoretical guarantees of performance and insight on the rates of learning along eigendirections that converge as the graph grows.
These results were demonstrated through simulations on synthetic and real-world graphs. To conclude, a limitation of this work is that graphons are only good models for dense graphs. In future work, we plan to extend our results to more general graph models, and to better understand the relationship between the spectra of the graph and of the GNTK.



\bibliography{myIEEEabrv,bib_dissertation,example_paper}
\bibliographystyle{icml2022}

\newpage
\appendix
\onecolumn
\section{Proof of GNTK Convergence}
\label{appendix:convergence_proof}

\begin{proof}[Proof of Thm. \ref{theorem1}]

To simplify notation, we will write $\Theta(X,X';\bbW,\ccalH)=\Theta(X,X')$ and $\Theta(X_n,X_n';\bbW_n,\ccalH)=\Theta(X_n,X_n')$. Let $Y$ be an arbitrary $L^2$ signal. Using the triangle inequality,
\begin{align} \label{eq:NTK_matrix}
\begin{split}
  ||\Theta(X,X')Y  - \Theta(X_n,X_n')Y || &\leq \sum_k || \sigma'(U_L) T_{W}^{(k)} X_{L-1}  \langle\sigma'(U_L') T_{W}^{(k)} X_{L-1}',Y\rangle  \\
  & \quad - \sigma'(U_{L,n})T_{W_n}^{(k)} X_{n,L-1} \langle\sigma'(U_{L,n}') T_{W_n}^{(k)} X_{L-1,n}'Y\rangle || + \ldots
  \end{split}
\end{align}
where there are $L$ terms for each $k$.

Since $\sigma$ is Lipschitz, from the formula  we see that it suffices to prove the following convergence statements for all $k \in {1,...,K-1}$ as $n \to \infty$:
\begin{align}
  \|U_{l,n} &- U_l\| \to 0 \label{eq:conv:1} \\
  \|U_{l,n}' &- U'_l\| \to 0 \label{eq:conv:2}
\end{align}
as well as
\begin{align}
  \norm{T_{W_n}^{(k)} - T_{W}^{(k)}} \to 0 \label{eq:conv:3}\\
  \norm{T_{h_{l,n}} - T_{h_{l}}} \to 0 \text{.} \label{eq:conv:4}
\end{align}
 \comment{These are proved for a more general class of (analytic) filters 
 (i.e. with $K \to \infty$ in a controlled way) using Fourier transforms in Appendix D of the previous paper (cite).} 
 Here, we prove these statements for each finite $K$ using mathematical induction.
 \comment{\red{Luana can you rewrite these lines: Still, our result also holds for general analytic filters, because any polynomial in $[0,1]$ is Lipschitz---ensuring the convergence (30) from (cite))---, and, provided that the $H_{n,l}$ are Lipschitz, (31) converges per (cite).} \blue{Is that so clear? We would need to quantify the rate of convergence in terms of both $k$ and $n$ in a more careful way I think, so a longer explanation I think might be warranted to say we can do the case $K \to \infty $. It looks like if $k^2/n \to 0$ then we can send $K \to \infty$ along with $n \to \infty$, provided the $h_k$'s aren't doing anything too crazy}. 
 }
For fixed, finite $K$, \eqref{eq:conv:3} implies  \eqref{eq:conv:4}. The convergence in \eqref{eq:conv:1} and \eqref{eq:conv:2} are essentially the same, so let us focus on the first without loss of generality. 
Expanding the first layer, we have
\begin{align*}
  U_{1,n} - U_1 &= \sum_{k} h_{1,k} (T_{W_n}^{(k)} X_n - T_{W}^{(k)} X) = \sum_{k} h_{1,k} (T_{W_n}^{(k)} X_n - T_{W}^{(k)} X  + T_{W}^{(k)}  X_n - T_{W}^{(k)}  X_n)  \\
  & = \sum_{k} h_{1,k} ((T_{W_n}^{(k)} - T_W^k) X_n + T_{W}^{(k)} (X_n - X)).
\end{align*}
Therefore, by the triangle inequality and the definition of the operator norm, we have
\begin{align}
 \|U_{1,n} - U_1\| & \leq \sum_{k} |h_{1,k}| 
 \|T_{W_n}^{(k)} - T_W^k\| \|X_n\| + \sum_k |h_{1,k}| \norm{T_{W}^{(k)}} \| X_n - X \| \\
  & \leq \sum_{k} \max(|h_{1,k}|) \| T_{W_n}^{(k)} - T_W^k \|  \| X_n \| + \sum_{k} \max(|h_{1,k}|) \| T_{W}^{(k)}\| \|X_n - X\| .
  \label{eq:u1conv}
 \end{align}

Looking at layer $l$, we have
\begin{align*}
U_{l,n} - U_l = \sum_{k} h_{l,k} (T_{W_n}^{(k)} \sigma(U_{l-1,n}) - T_{W}^{(k)} \sigma(U_{l-1})). 
\end{align*}
From \ref{eq:u1conv}, using mathematical induction, we have, for arbitrary $l$, 
\begin{align}
\|U_{l,n} - U_l \| &\leq (\| X_n \| \|X_n - X\| + \| T_{W_n}^{(k)} - T_W^k \| ) \cdot (\max(|h|) k)^l \\
& + \sum_{j=1}^{l-1}(\max(|h|) k)^{l-1-j} \max(|h|) \| T_{W_n}^{(k)} - T_W^k \| 
\label{eq:ulconv}
\end{align}
where $h$ is the max of $h_{l,k}$ over all $l$ and $k$, and so convergence of $T_{W_n}^{(k)} \to T_{W}^{(k)}$ as in \eqref{eq:conv:3} and $X_n \to X$ implies convergence of $U_{1,n} \to U_1$ and $U_{l,n} \to U_l$.
Since the nonlinearity $\sigma$ is Lipschitz, convergence of \eqref{eq:conv:2} follows from convergence of \eqref{eq:conv:1}.

We now focus on proving \eqref{eq:conv:3}.
By Cauchy-Schwarz, 
\begin{align*}
  \norm{(T_{W_n} - T_{W})Y}^2 & = \int_0^1 \left( \int_0^1 (\bbW_n(u,v) - \bbW(u,v)) Y(v) dv \right)^{2} du \\
  &\leq \left(\int_0^1 \int_0^1 (\bbW_n(u,v) - \bbW(u,v))^{2} dv du \right) ||Y||^2.
\end{align*}
Therefore, taking the square root, dividing by $||Y||$, and using \eqref{eq:normdef}, we get
\begin{align}
\norm{T_{W_n} - T_W} \leq \norm{\bbW_n - \bbW}_{L^2}.
\label{eq:Twconv}
\end{align}
It follows that convergence of the induced graphons $\bbW_n$ to the limiting graphon $\bbW$ implies \eqref{eq:conv:3} for $k=1$.
For higher iterates we note 
\begin{align*}
 \norm{(T_{W_n}^{(k)} - T_{W}^{(k)})Y}^2  &\leq {\int_0^1 \left( \int_0^1 (\bbW_n T_{W_{n}}^{(k-1)} Y(v) - \bbW T_{W}^{k-1}Y(v) )  dv \right)^{2} du} \\
  & \leq {\int_0^1 \left( \int_0^1 \bbW_n (T_{W_{n}}^{(k-1)} - T_{W}^{(k-1)})Y(v)  dv \right)^{2} du }  + { \int_0^1 \left( \int_0^1 (\bbW_n - \bbW)T_{W}^{(k-1)}Y(v)  dv \right)^{2} du }. 
\end{align*}
Therefore, by the same Cauchy-Schwarz argument above, $\norm{T_{W_n}^{(k)} - T_{W}^{(k)}}$ is upper bounded by 
\begin{align*}
\norm{\bbW_n} \norm{T_{W_n}^{(k-1)} - T_W^{(k-1)}} + \norm{\bbW_n - W} \norm{T_W^{(k-1)}}
\end{align*}
and so convergence $\bbW_n \to \bbW$ implies the convergence of all the iterates $T^{(k)}_{W_n} \to T^{(k)}_{W}$ by induction. In fact, one can write 
\begin{align}
\norm{T_{W_n}^{(k)} - T_{W}^{(k)}} \leq k \norm{\bbW_n - \bbW}_{L^2} 
\end{align}
\label{eq:Twkconv}
since $\norm{T_W^{(k-1)}} \leq 1$ and $\norm{\bbW_n}\leq 1 $.

From the proof one can, with some simple algebra, compute a quantitative estimate of $\norm{\Theta(X_n,X_n') - \Theta(X,X')}$ in terms of $\norm{\bbW_n - \bbW}$, $\norm{X_n - X}$, $\norm{X}$ and $\norm{X'}$. Precisely, by adding and subtracting cross-terms for each term in \eqref{eq:NTK_matrix} and using the triangle inequality, we get
\begin{align}
\norm{\Theta(X_n,X_n') - \Theta(X,X')} \leq C (K^{4+L} \norm{\bbW_n - \bbW} + K^{2+L}\norm{X_n - X}),
\label{eq:nonasymp}
\end{align}
where $C$ is a constant in terms of $\| X \|, \| X' \|, max(| h |), l$.
For different choices of graphs, one can then compute these quantities for an explicit estimate of the convergence bound. \cite{ruiz2021transferability}[Appendix B] provides quantitative estimates on $\norm{\bbW_n - \bbW}$ and $\norm{X_n - X}$ for random graph models such as so-called template graphs, weighted graphs and stochastic graphs.

\end{proof}

\section{Proof of Spectrum Convergence} \label{appendix:eigenvalue_proof}
\begin{proof}[Proof of Theorem \ref{theorem2}]
The proof relies on \cite{anselone1968spectral}[Prop. 7.1] which proves that if a sequence of compact operators $T_n$ converges to a compact operator $T$ in the appropriate operator norm, the spectrum of $T_n$ converges to the spectrum of $T$. Additionally, from \cite{anselone1968spectral}[Thm. 6.3 and Prop 7.1], for every $p$, the eigenspace associated with the $p^{th}$ eigenvalue of $T_n$ also converge to the corresponding eigenspace of $T$. From Thm. \ref{theorem1}, we know that $\bbTheta_n \to \bbTheta$. It remains to show that $\bbTheta_n$ and $\bbTheta$ are compact.

For any graphon $\bbW$, the operators $T_W^{(k)}$ and $T_H$ [cf. \eqref{eqn:gen_graphon_convolution}] are compact as they are linear compositions of compact operators with bounded linear weights. From \eqref{eqn:wntk}, we thus conclude $\|\Theta(X,X';\bbW,\ccalH)\|< \infty$. Therefore, for $Y \in L^2([0,1])$, $\|\Theta(X,X';\bbW,\ccalH)Y\|< \infty$, by which we conclude that $\bbTheta$ is compact. 
\end{proof}

\begin{figure*}[ht] 
\centering
\includegraphics[width=0.32\textwidth]{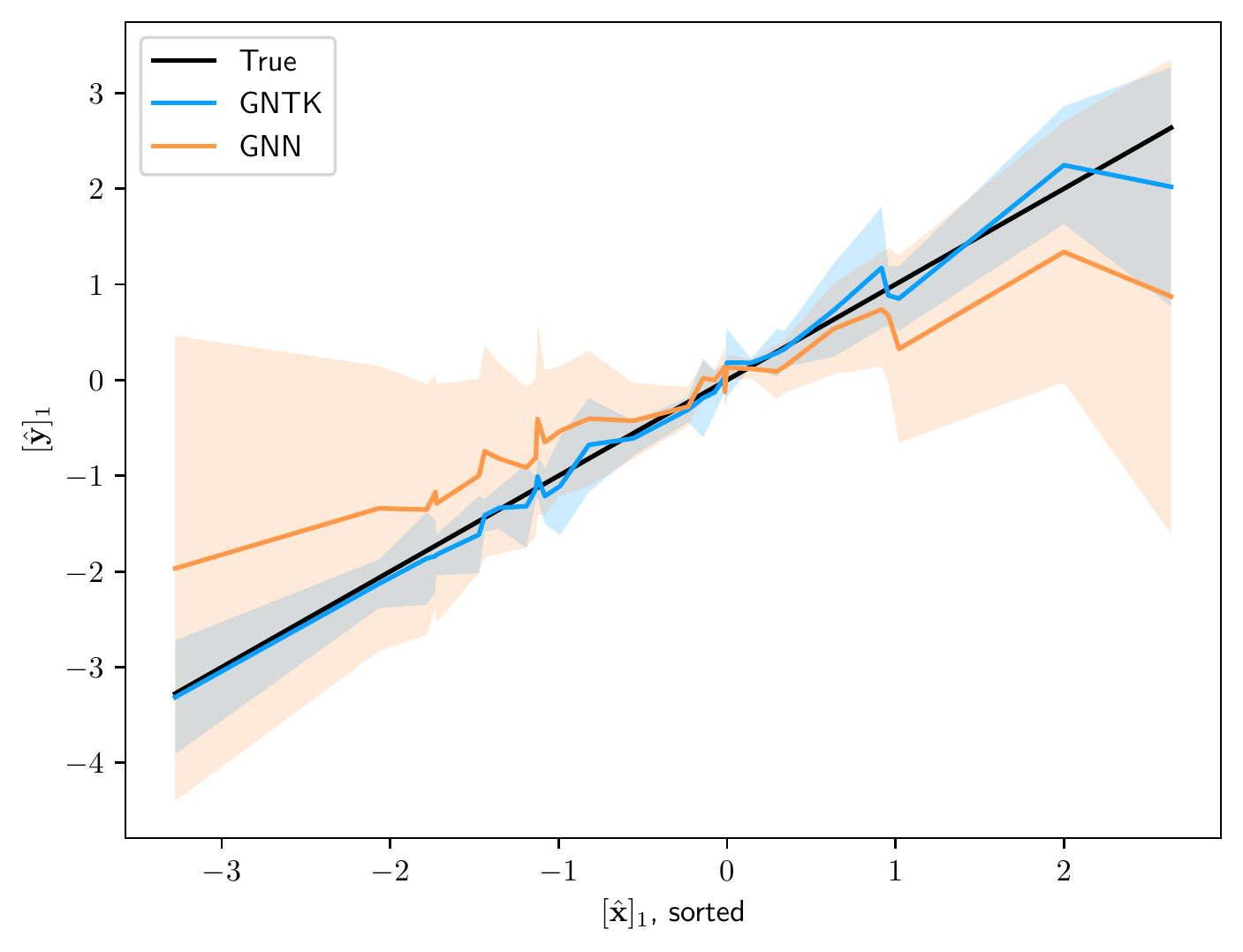}
\includegraphics[width=0.32\textwidth]{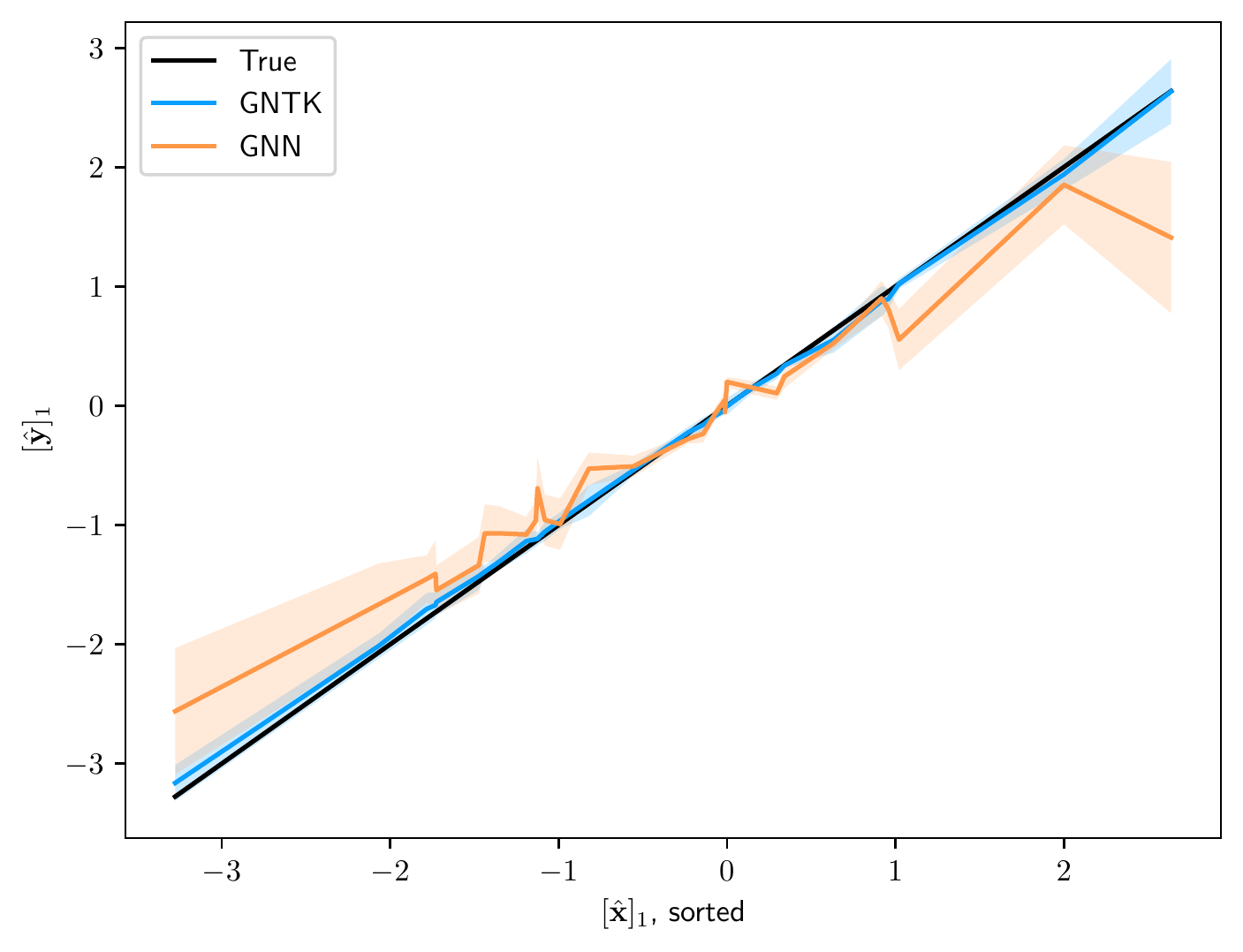}
\includegraphics[width=0.32\textwidth]{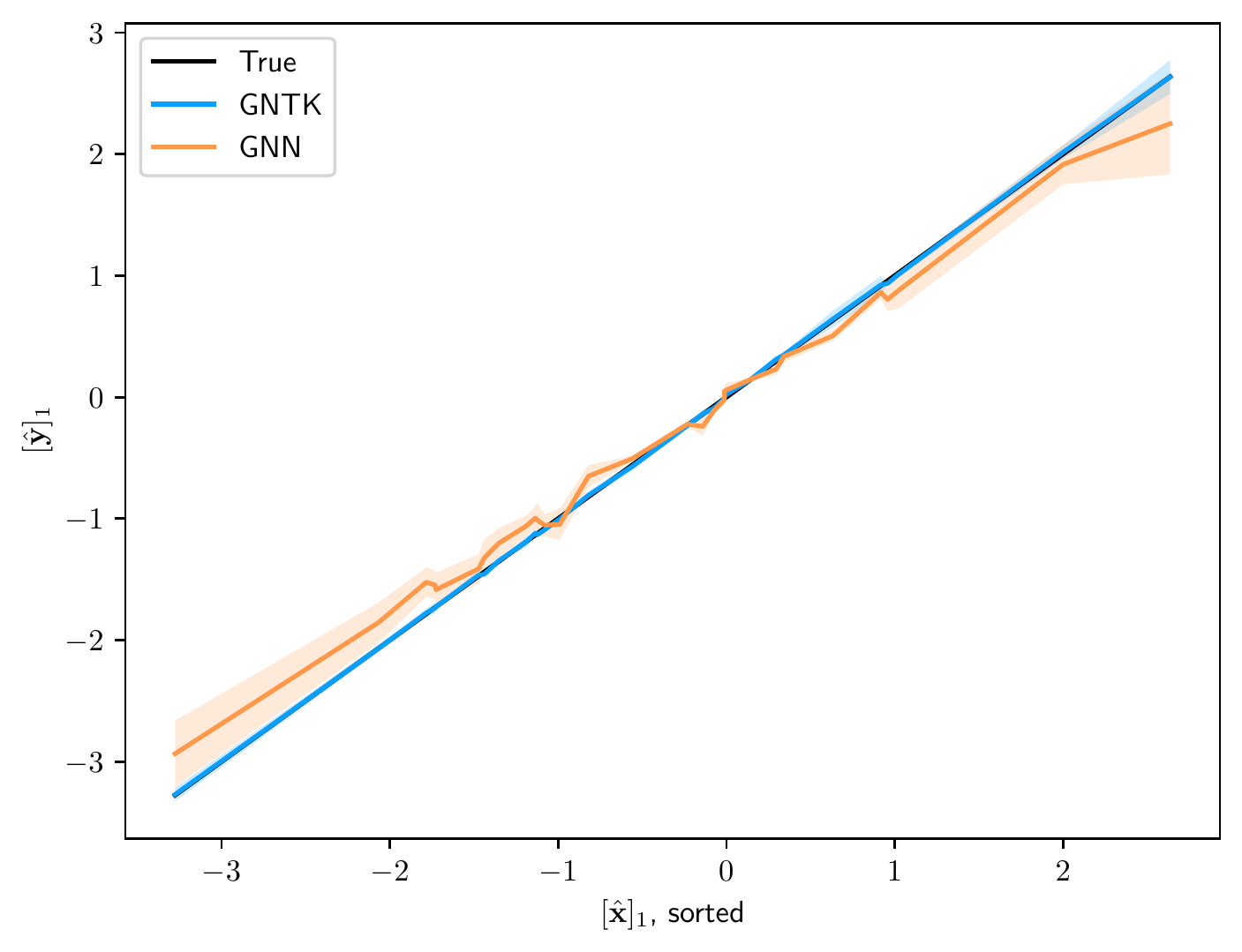}
\includegraphics[width=0.32\textwidth]{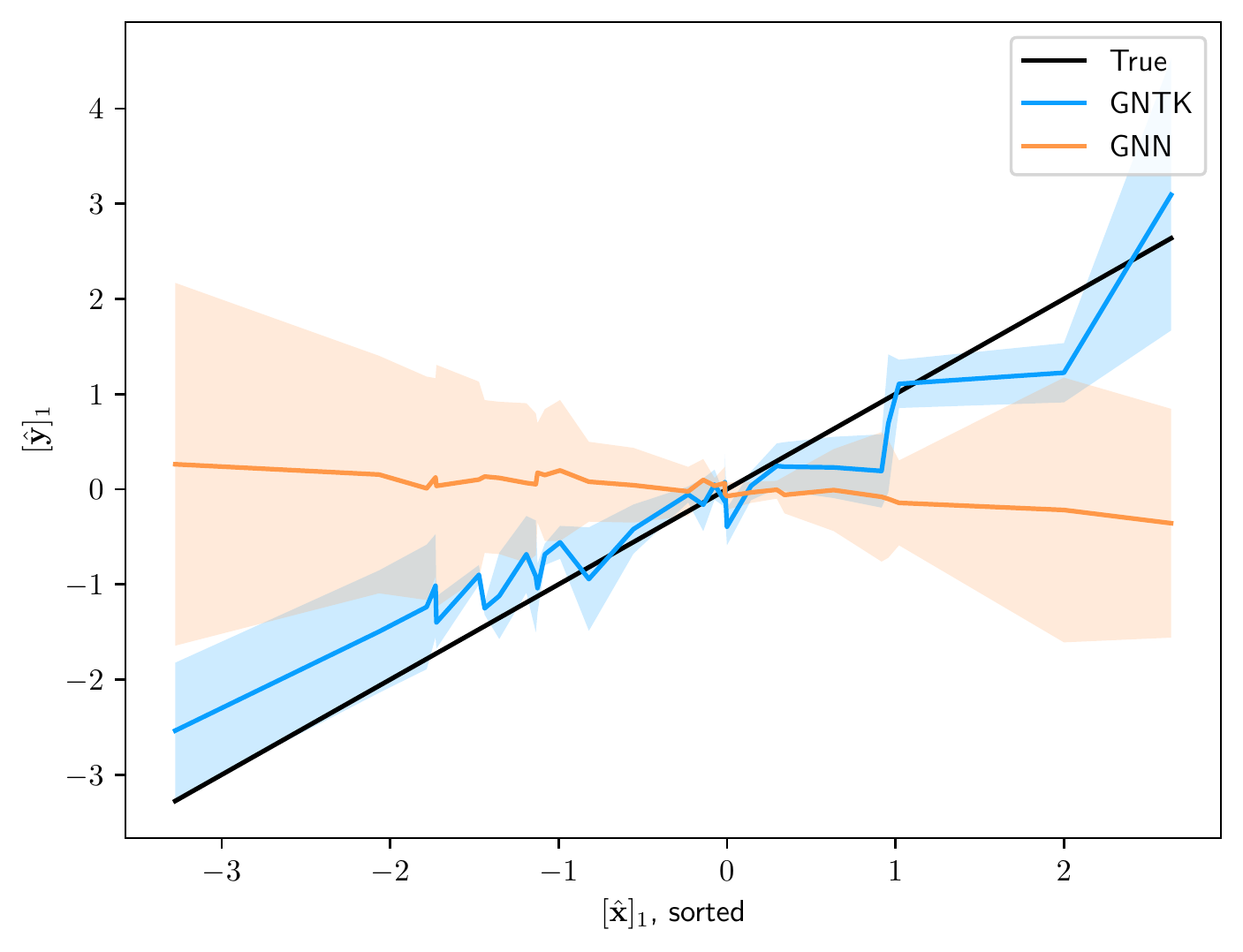}
\includegraphics[width=0.32\textwidth]{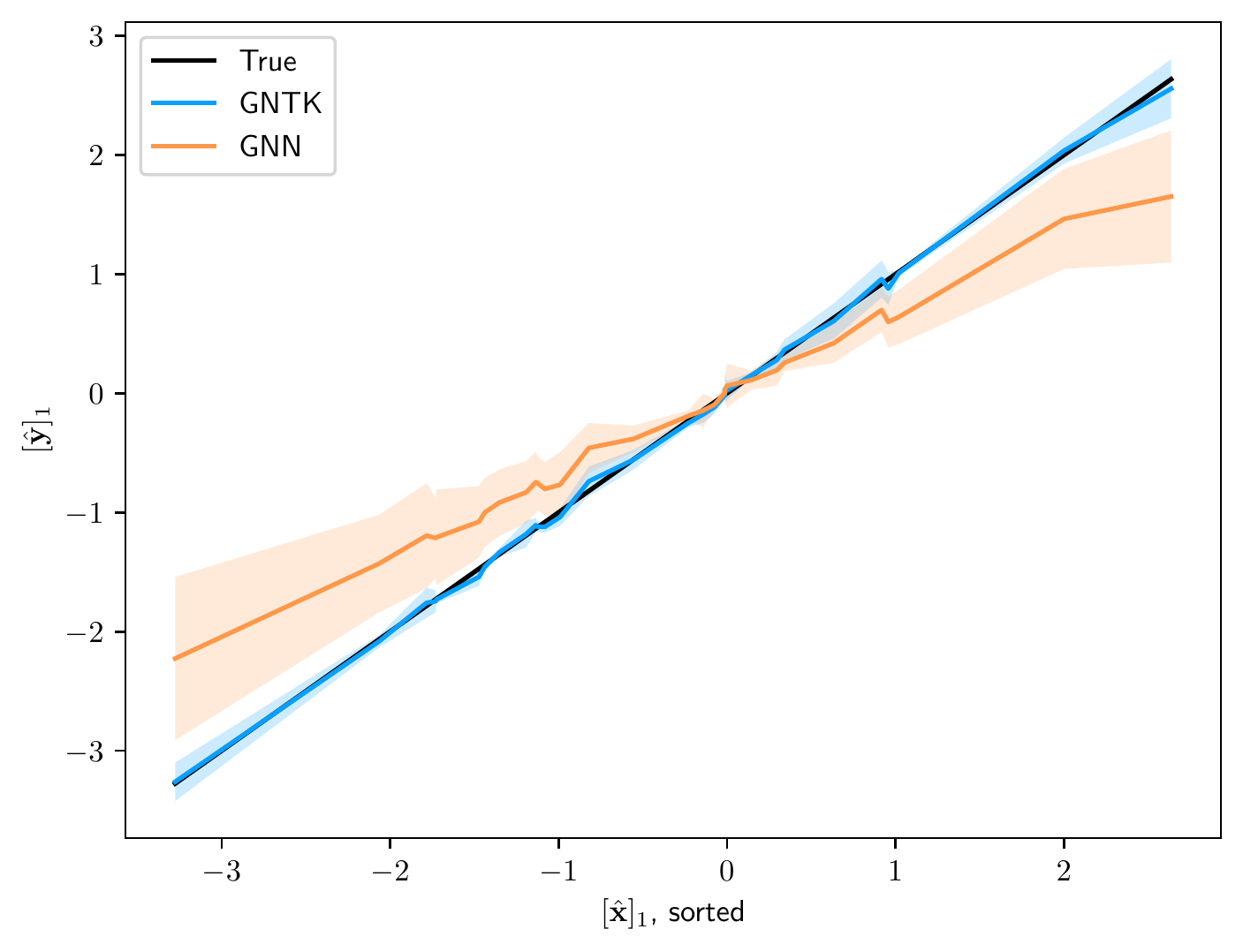}
\includegraphics[width=0.32\textwidth]{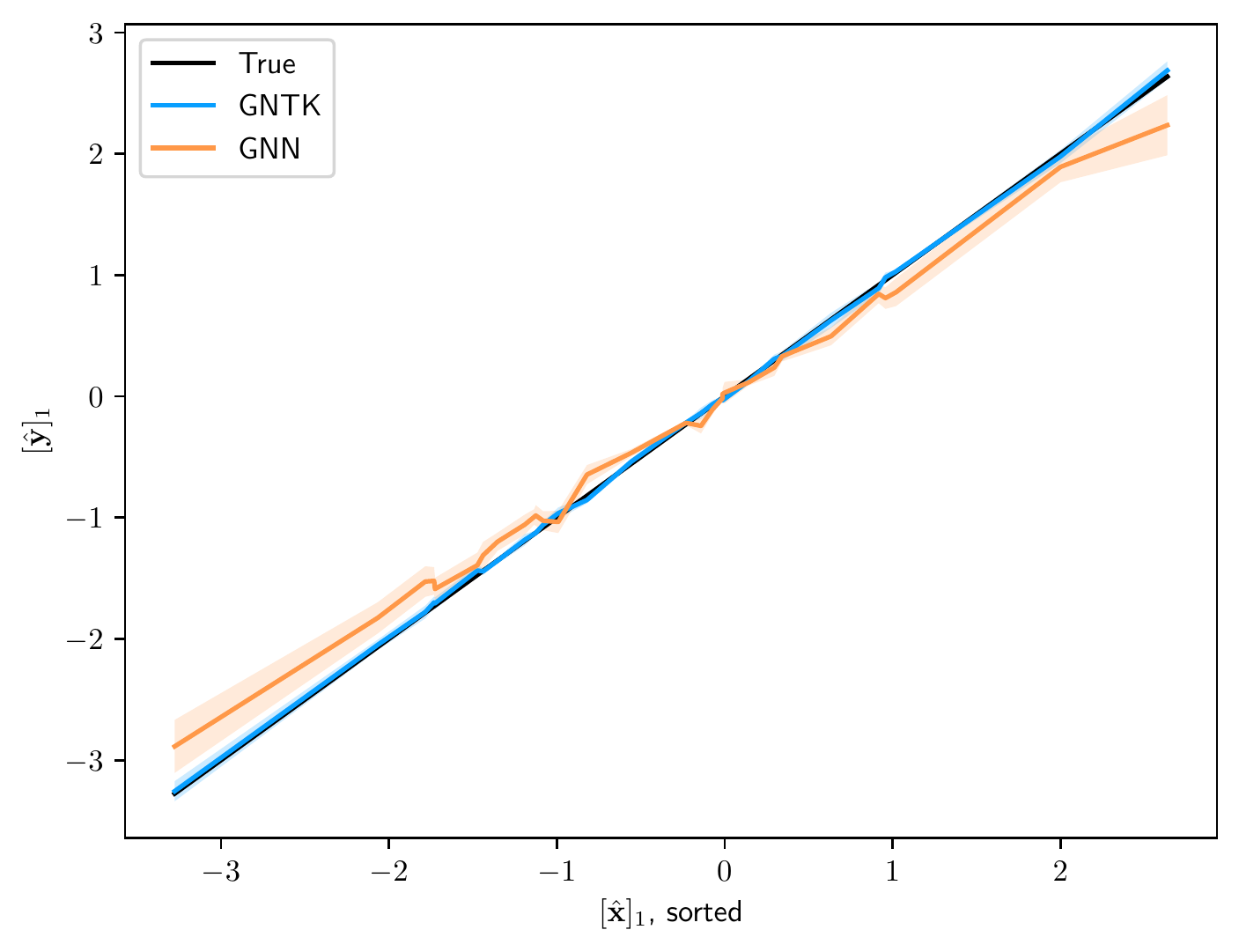}
\includegraphics[width=0.32\textwidth]{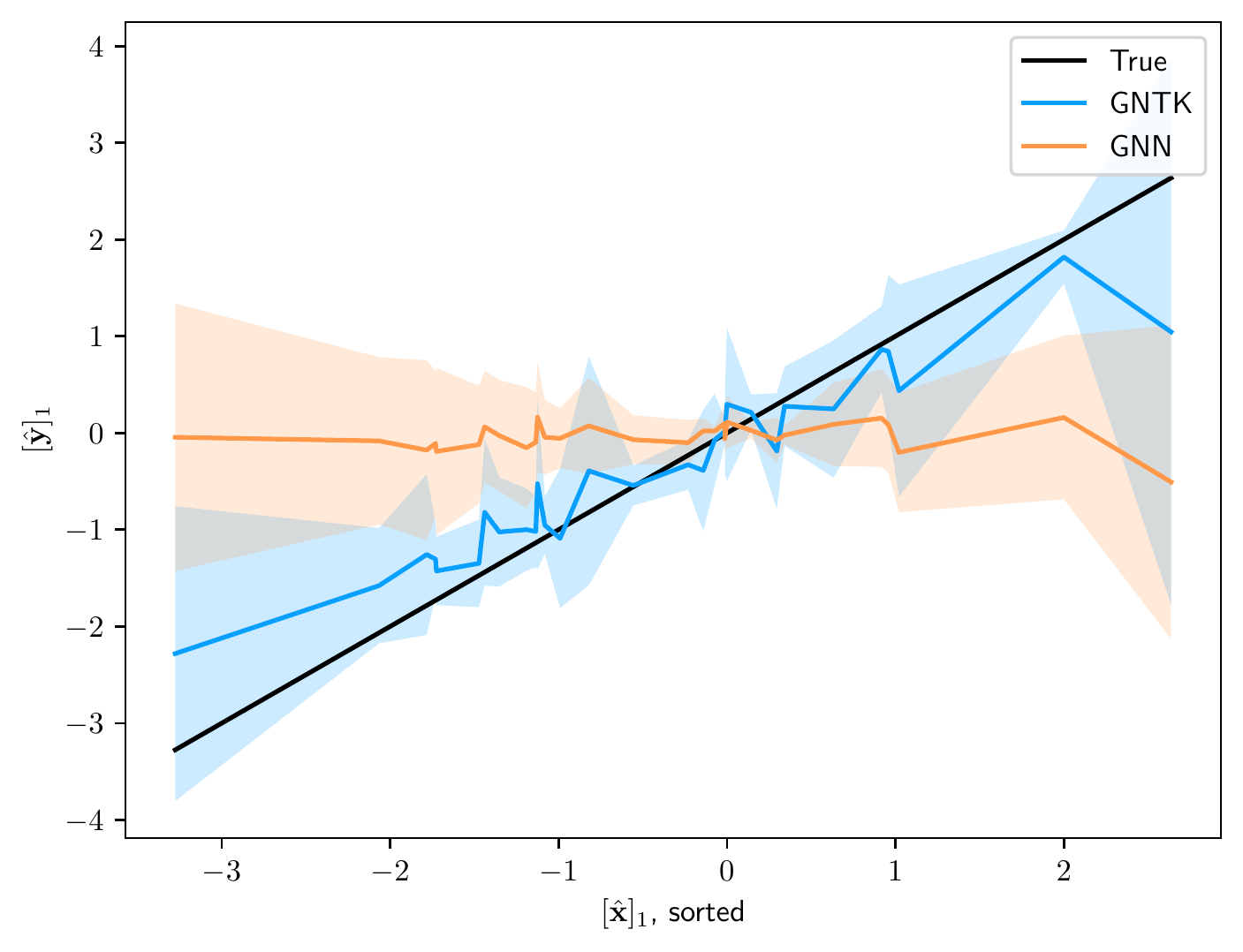}
\includegraphics[width=0.32\textwidth]{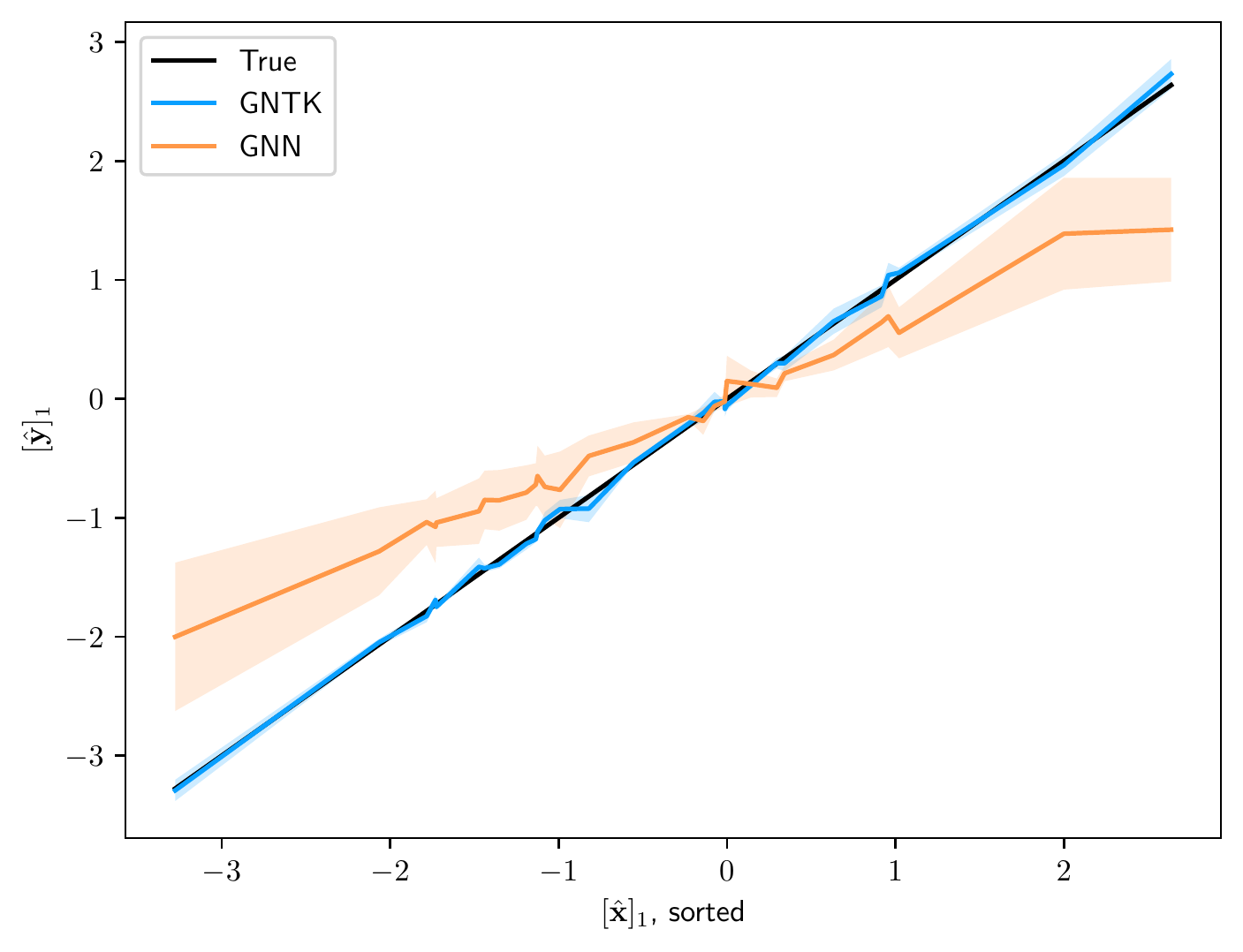}
\includegraphics[width=0.32\textwidth]{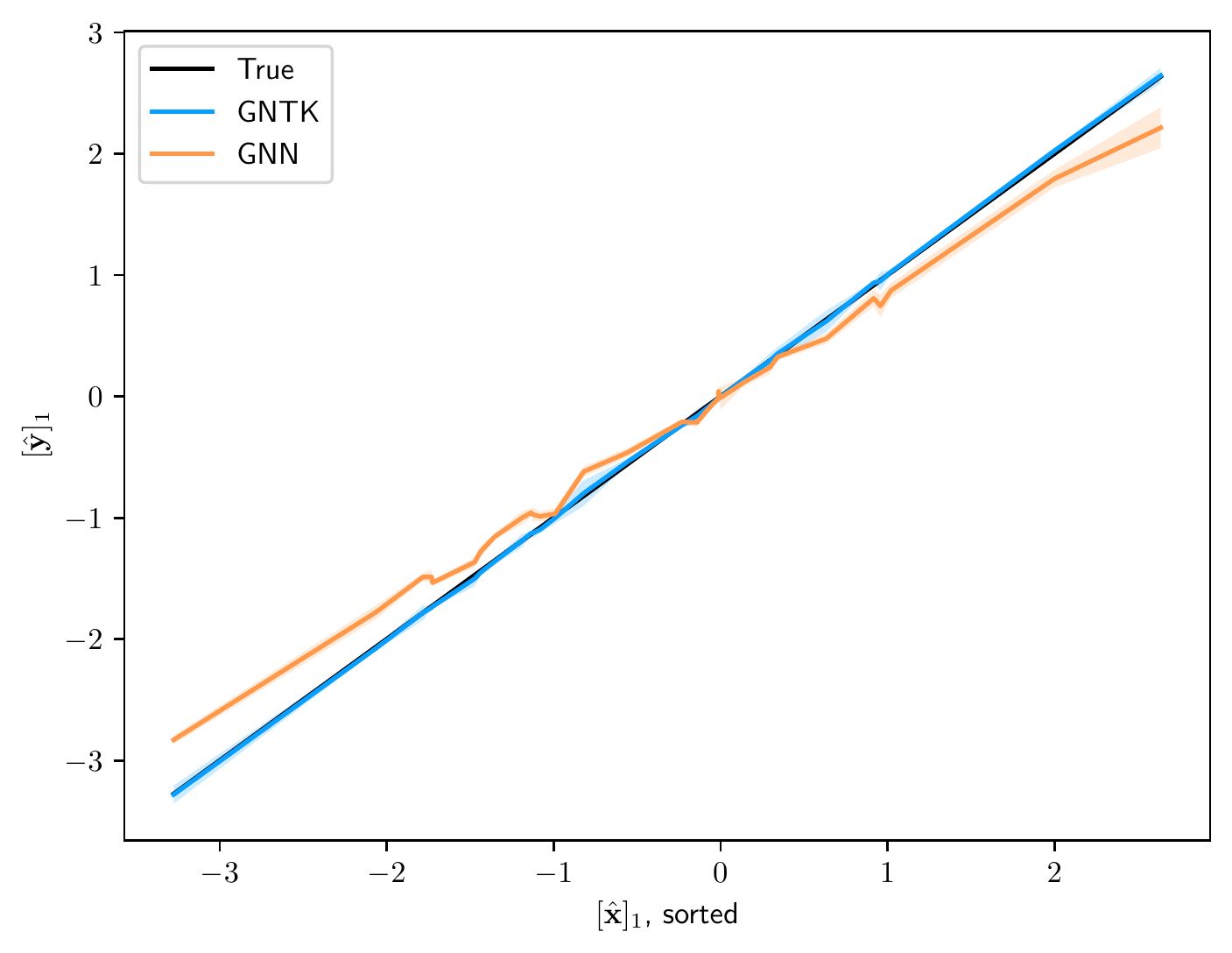}
\includegraphics[width=0.32\textwidth]{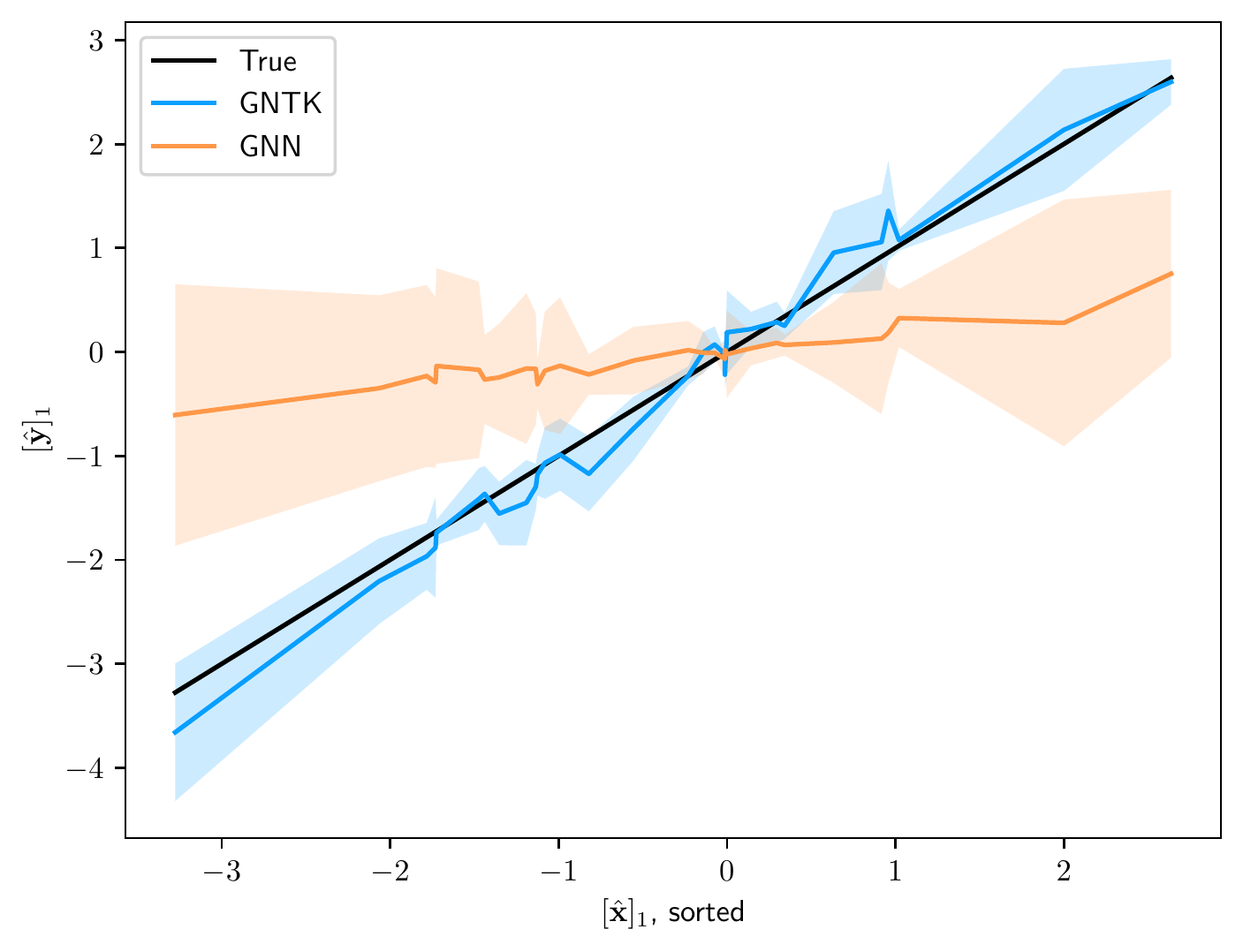}
\includegraphics[width=0.32\textwidth]{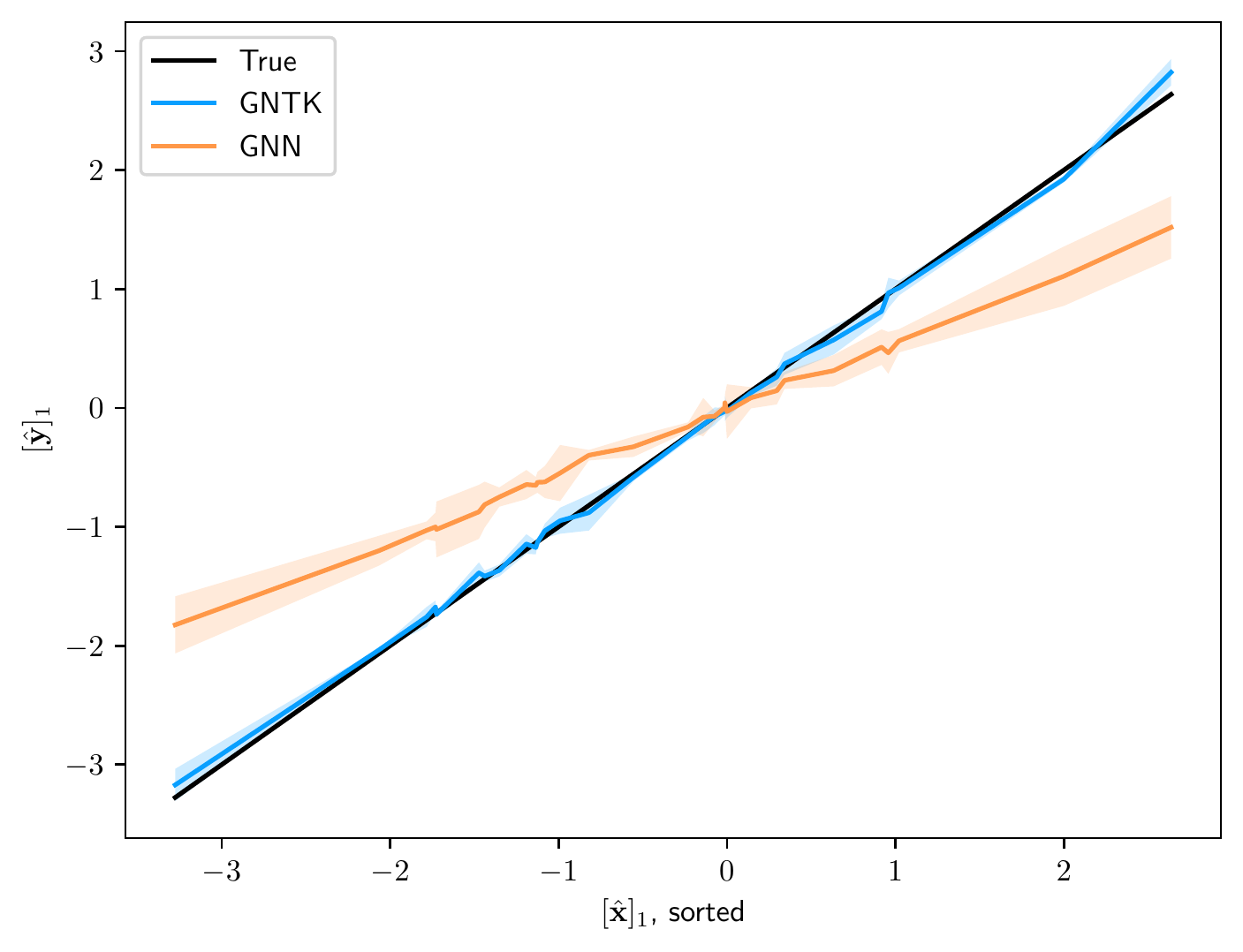}
\includegraphics[width=0.32\textwidth]{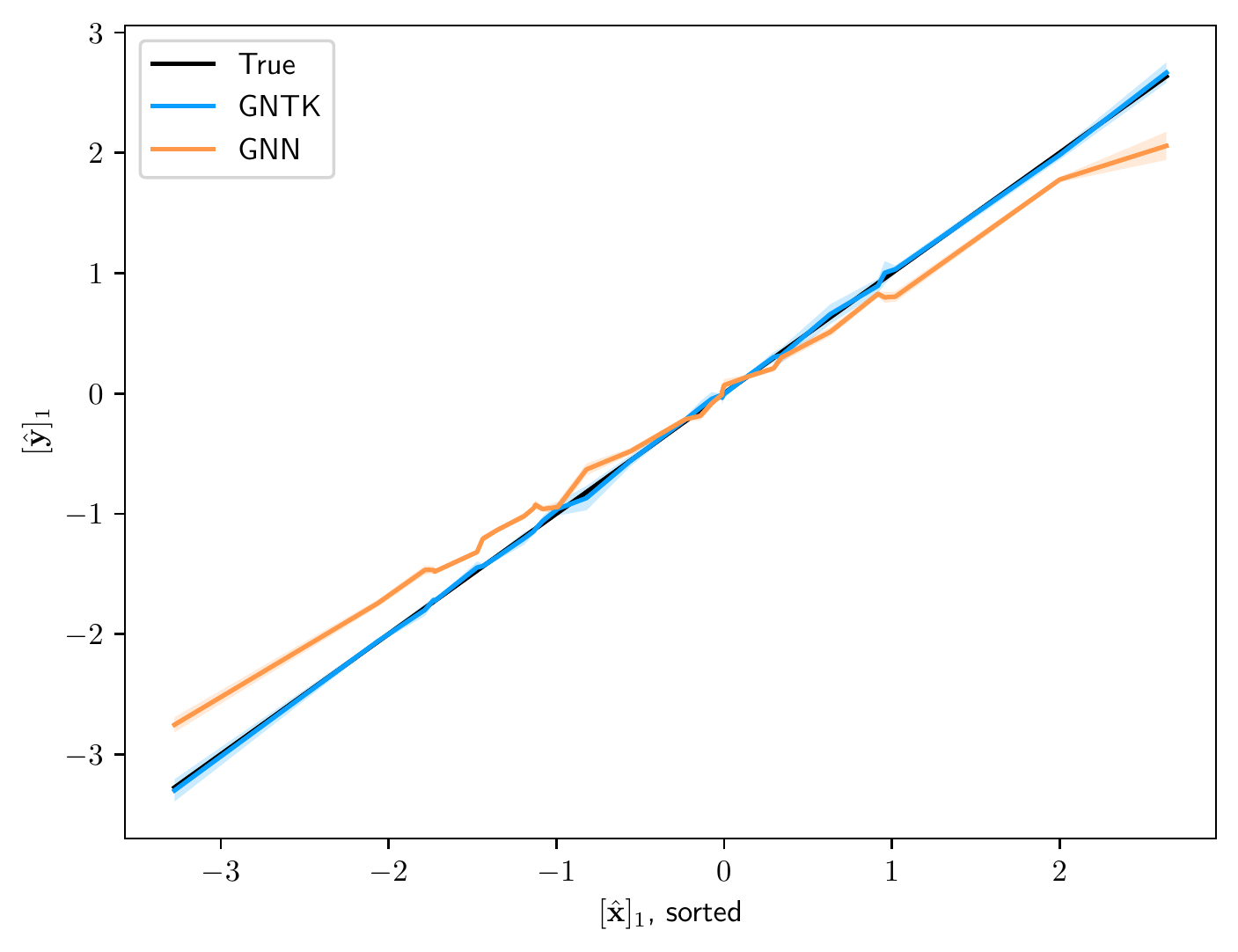}
\includegraphics[width=0.32\textwidth]{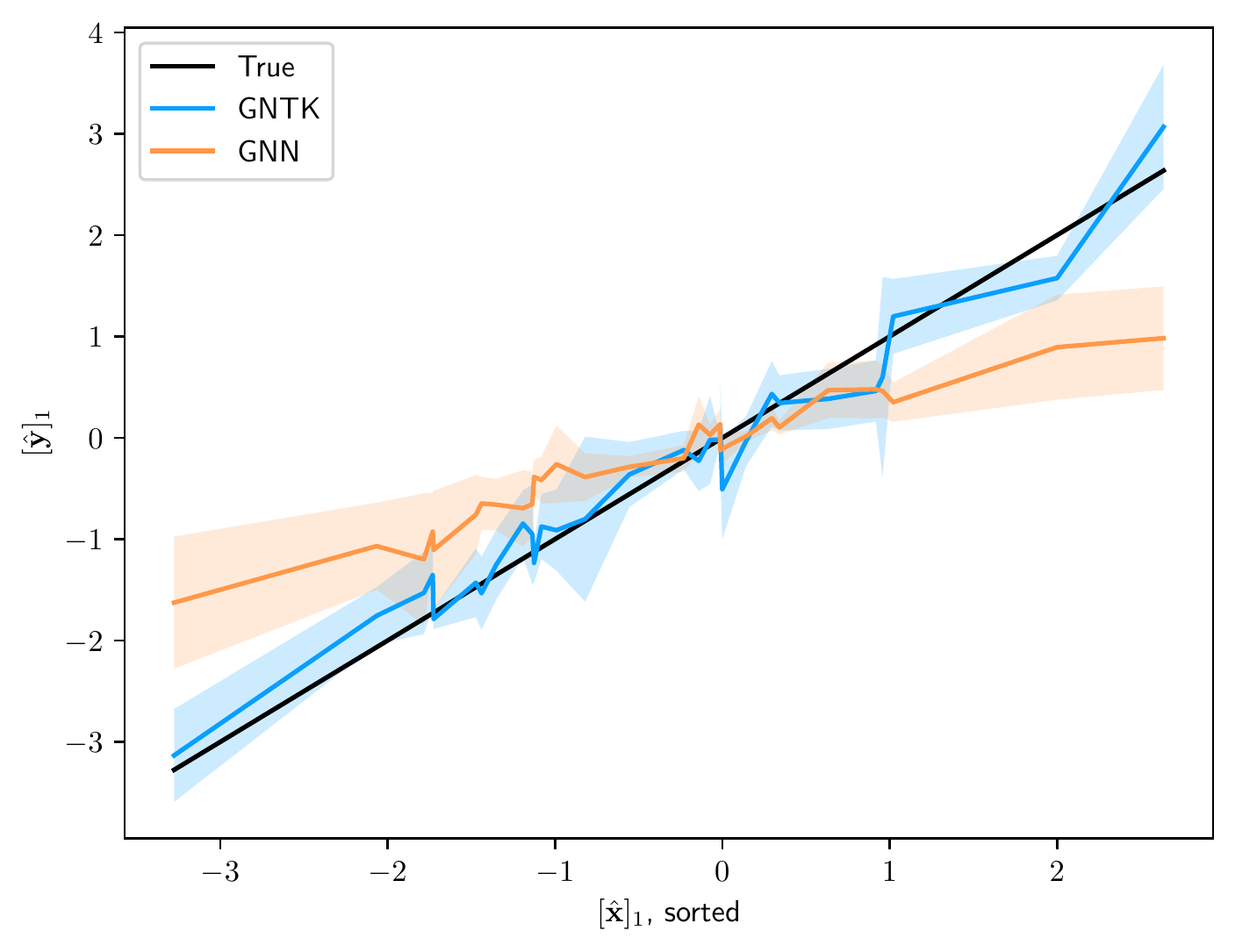}
\includegraphics[width=0.32\textwidth]{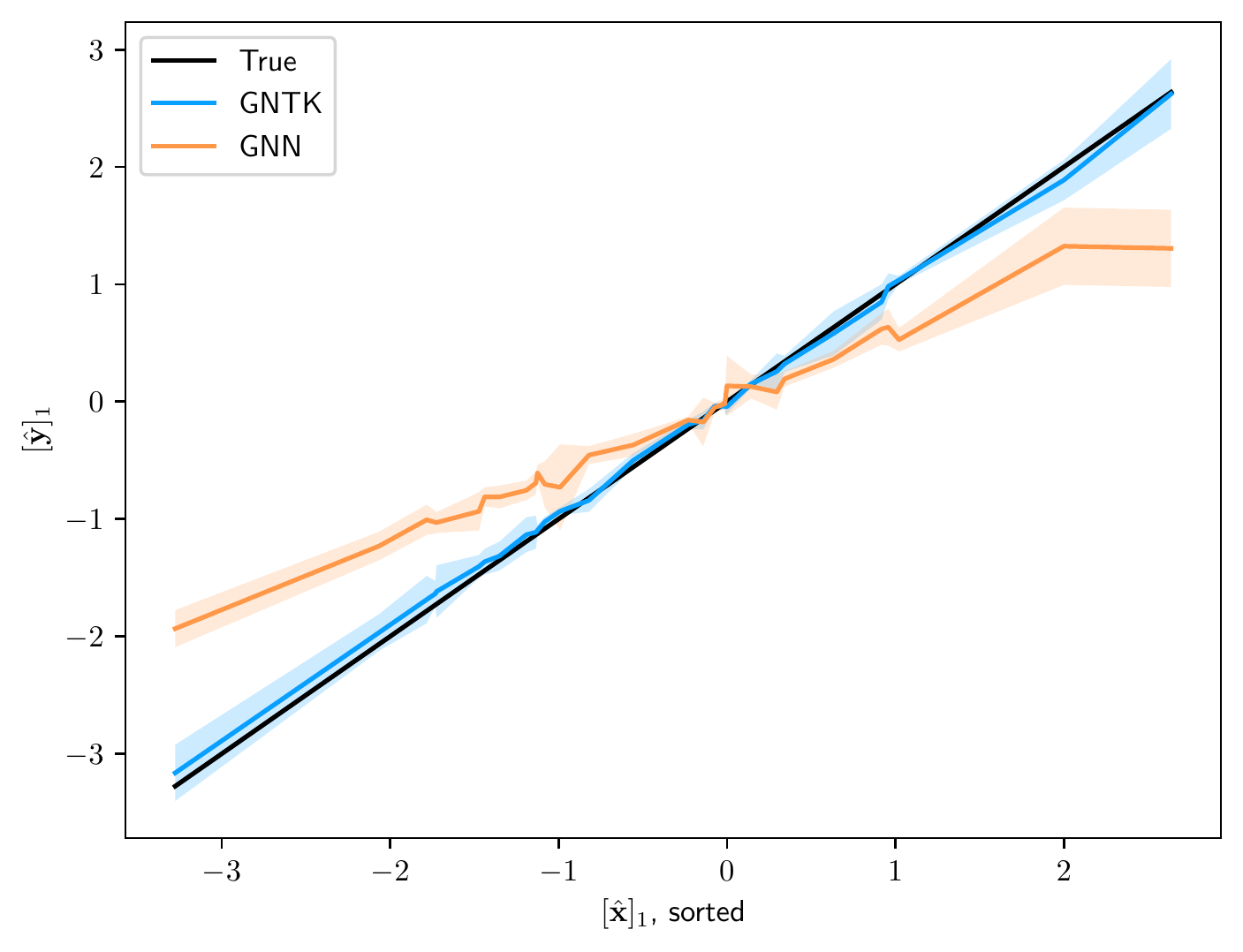}
\includegraphics[width=0.32\textwidth]{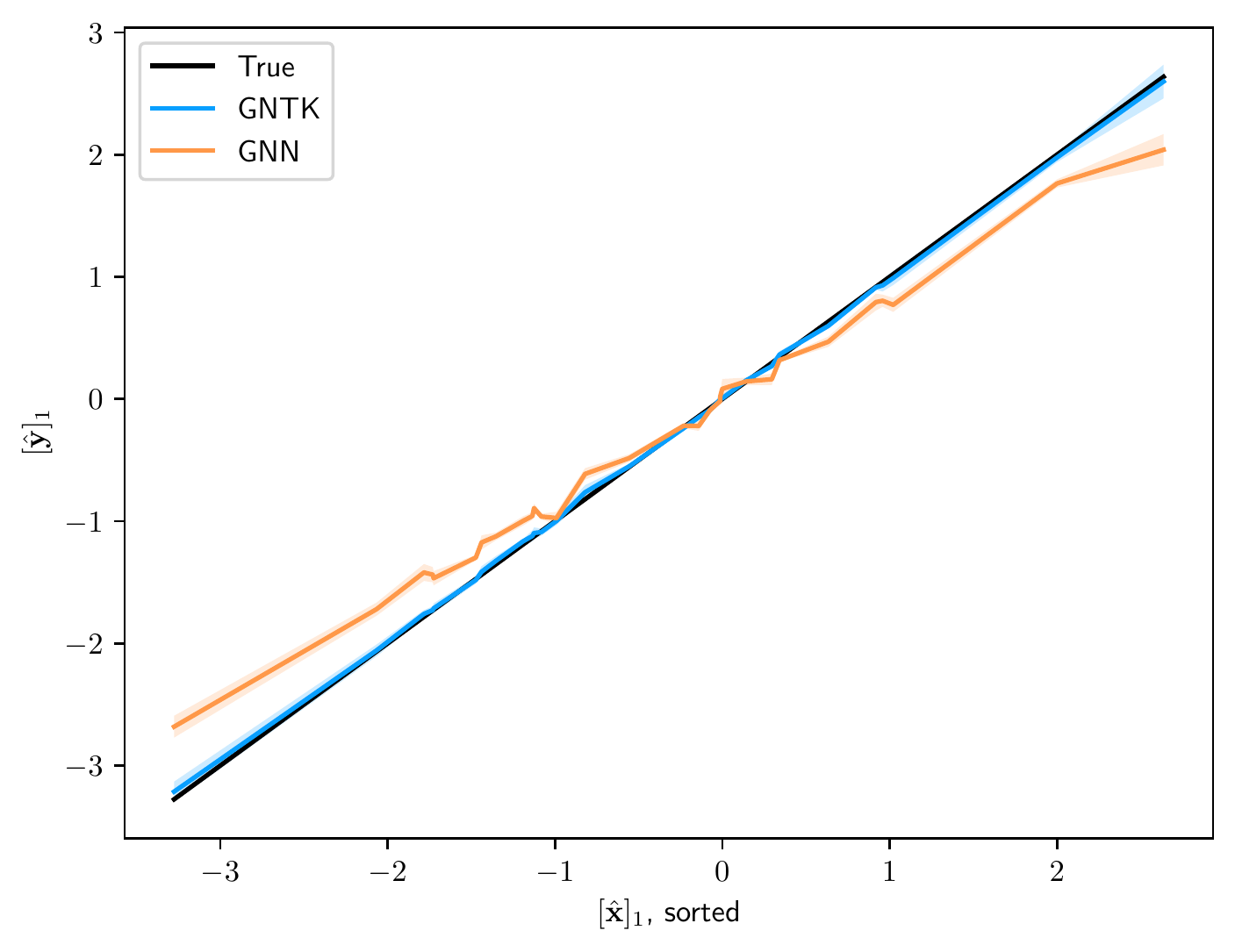}
\caption{Projections of the inputs and outputs of the GNTK and GNN, and of the target or true labels onto the first eigenvector of the adjacency matrix of a $n$-node SBM graph for widths $F=10$ (left), $F=50$ (center) and $F=250$ (right). From top to bottom $n=20,40,60,90,100$.}
\label{fig:appx}
\end{figure*}

\section{Wide Network Behavior: Additional Experiments}
\label{appendix:wide}

Under the same expeirmental setting of Sec. \ref{sbs:wide_sims}, in Fig. \ref{fig:appx}, we plot the projection of the outputs $\bby$ onto the first graph eigenvector $\bbv_1$, $[\hby]_1=\bbv_1^T\bby$ against the projections of the inputs onto the same vector $[\hbx]_1 = \bbv_1^T\bbx$ (sorted in ascending order) for both the GNNs and the GNTKs for $F^{(1)}_1=10$ (left), $F^{(2)}_1=50$ (center), and $F^{(3)}_1=250$ (right), and $n=20,40,60,80,100$ (top to bottom). The results are averaged over $5$ GNN initializations, with the solid lines representing the mean and the shaded areas representing the standard deviation. The behavior is as expected: as the width increases, the GNN and the GNTK have smaller variance in behavior for different weight initializations, and the GNN and GNTK curves align. As $n$ increases, we also observe a slight improvement in the variance of the GNN curves. This is expected as, when GNNs are trained on larger graphs, they have better transferability \cite{ruiz20-transf}. No significant trends are observed for the GNTK curves for increasing $n$.


\end{document}